\title{\LARGE \bf
\emph{Lie}s We Can Trust: Quantifying Action Uncertainty with Inaccurate Stochastic Dynamics through Conformalized Nonholonomic \emph{Lie Groups}
}
\author{
Lu\'is Marques, Maani Ghaffari, and Dmitry Berenson
\thanks{The authors are with the Robotics Department,
        University of Michigan, Ann Arbor, MI 48109 USA
       {\tt\small \{lmarques,maanigj,dmitryb\}@umich.edu}}%
}
\begin{document}

\maketitle
\thispagestyle{empty}
\pagestyle{empty}

\begin{abstract}
We propose \longMethodName (\methodName), a symmetry-aware conformal prediction-based algorithm that constructs, for a given action, a set guaranteed to contain the resulting system configuration at a user-defined probability. 
Our assurance holds under both aleatoric and epistemic uncertainty, non-asymptotically, and does not require strong assumptions about the true system dynamics, the uncertainty sources, or the quality of the approximate dynamics model.
Typically, uncertainty quantification
is tackled by making strong assumptions about the error distribution or magnitude,
or by relying on uncalibrated uncertainty estimates --- i.e., with no link to frequentist probabilities --- which are insufficient for safe control.
Recently, conformal prediction has emerged as a statistical framework capable of providing distribution-free probabilistic guarantees on test-time prediction accuracy.
While current conformal methods treat robots as Euclidean points, many systems have non-Euclidean configurations, e.g., some mobile robots have \SEtwo.
In this work, we rigorously analyze configuration errors using Lie groups, extending previous Euclidean Space theoretical guarantees to \SEtwo. Our experiments on a simulated JetBot, and on a real MBot, suggest that by considering the configuration space's structure, our symmetry-informed nonconformity score leads to more volume-efficient prediction regions which represent the underlying uncertainty better than existing approaches. \\ \indent Website: \url{https://um-arm-lab.github.io/claps}
\end{abstract}

\section{INTRODUCTION}\vspace{-2mm}
Robotic systems often operate in stochastic environments while relying on imperfect dynamics models for control.
Inaccuracies in predicting future robot configurations can arise due to \textit{epistemic uncertainty} --- resulting from limited information, changes in robot-environment interactions (e.g., wear, new terrain), model simplifications (e.g., no-slip assumption), inputs delays, etc. --- and as a consequence of \textit{aleatoric uncertainty} --- irreducible stochasticity  (e.g., external disturbances).
Learning-based methods have been increasingly used to tackle uncertainty-rich control problems \cite{shi2019neural,salzmann2023real}, but can lack provable safety assurances due to using \textit{uncalibrated} uncertainty estimates, i.e., that cannot be interpreted as likelihoods. For example, the $90\%$ confidence region of a Gaussian Process can contain the true labels at a lower likelihood.
Conversely, traditional safety-critical tools might make strong distributional error assumptions---neglect stochasticity, assume disturbances are Gaussian or have fixed known bounds \cite{jeantube,agrawal2022safe}---which may not hold in practice.
Providing rigorous test-time distribution-free \textit{calibrated uncertainty predictions}, containing the true unobserved labels at the specified likelihood, when given inaccurate dynamics models and subject to uncharacterized external disturbances, remains an active research problem.
\begin{figure}[t!]
  \centering
  \includegraphics[width=\linewidth]{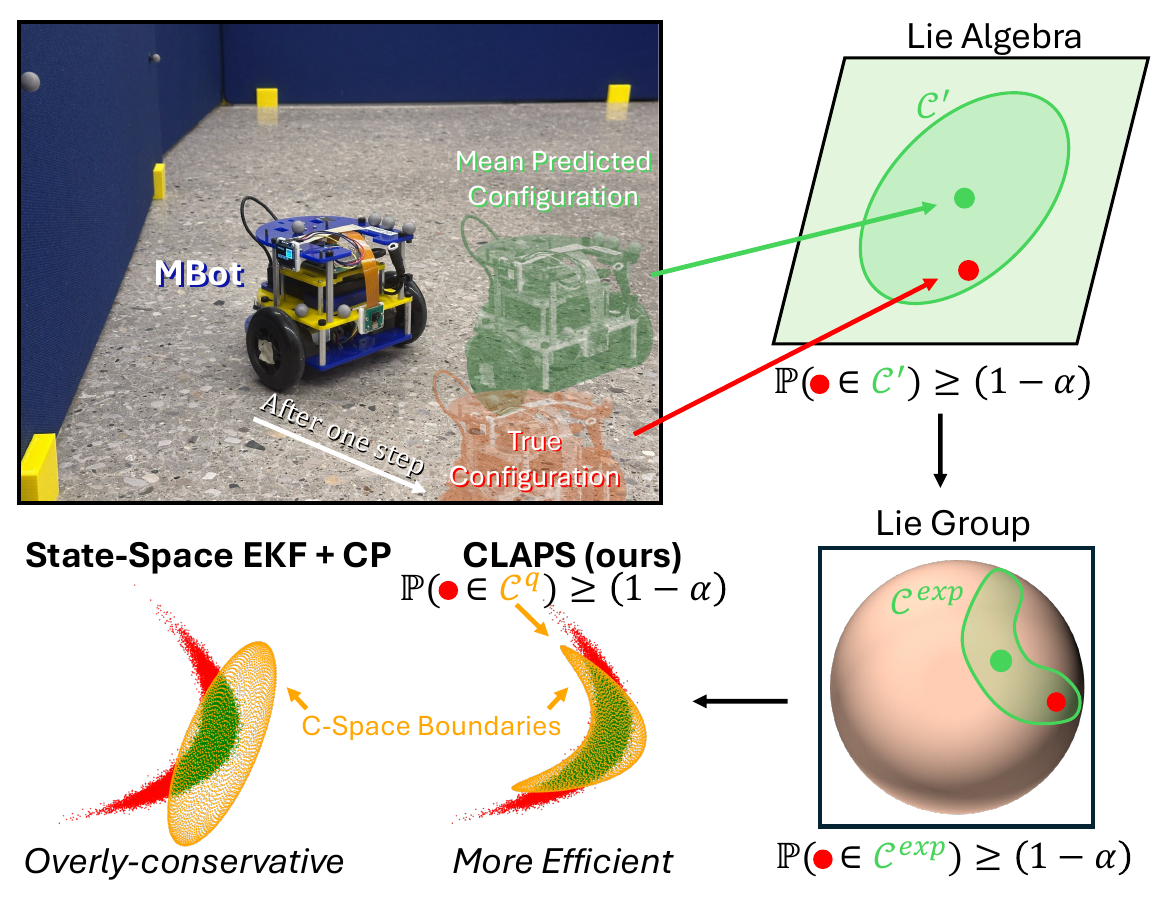}
  \vspace{-10mm}
  \caption{Our proposed algorithm (\methodName) constructs prediction regions $\CPpredRegionQ$ (in C-Space) that are \textit{marginally guaranteed} to contain the next \textit{unknown system configuration} at a user-set probability $(1-\alpha)$. By considering the robot's symmetry, we can construct more \textit{efficient} prediction regions.}
   \label{fig:title_figure}
   \vspace{-8mm}
\end{figure}
\newline\indent
\looseness-1
Conformal Prediction (CP) has recently emerged as a promising framework for tackling this problem non-asymptotically, enabling the construction of prediction regions containing the true system state at a given user-set probability---with the assured probability being defined over the distribution of test-time conditions and not for a specific test scenario.
However, despite many robot configuration spaces being non-Euclidean (e.g., \SEtwo~or \SEthree), current CP methods treat states as Euclidean vectors and consider point-prediction models \cite{lindemann2023safe,dixit2023adaptive}, resulting in convex prediction regions that can be overly-conservative --- \textit{inefficient} in terms of the volume covered to achieve the desired probability.
This lack of prediction efficiency limits the downstream applicability of CP regions for safe planning and control, e.g., if an action is deemed probabilistically safe when its resulting prediction region is collision-free, then region inefficiency reduces the set of safe actions, potentially also impacting task speed and other metrics.
There is an opportunity to derive \textit{tighter} prediction regions by accounting for the configuration space's inherent symmetry in a theoretically grounded way.
In robot localization, Lie group theory has been used to provide geometrically-aware state estimators \cite{barrau2016invariant,barfoot2024state}, improving convergence guarantees and speed. Yet, to our knowledge, CP has not been used with Lie group theory to produce predictions for non-Euclidean configuration spaces.
\newline\indent
We propose \longMethodName (\methodName), a CP-based algorithm that uses a dataset of state transitions to calibrate the uncertainty estimates provided by  approximate dynamics models. 
 \methodName~can be applied as a \textit{post-hoc calibration layer} on top of Lie-algebraic Gaussian uncertainty estimators, turning approximate covariances into \textit{provably calibrated} ones.
By deriving a Lie-based symmetry-respecting score metric, our calibration process produces regions that are smaller than a Euclidean baseline, while still containing the true configuration at the user-defined probability.
Our main contributions are:
\vspace{-1mm}
\begin{enumerate}
    \item \looseness-1 We 
    explain the results of \cite{bloch2009quasivelocities,bloch2015nonholonomic}, on converting nonholonomic dynamics from State Space (SS) to Lie group form, in a concise, self-contained, and clear manner.
    
    \item \looseness-1 We introduce an algorithm that, given an approximate dynamics model estimating prediction uncertainty as Gaussian, constructs state- and action-dependent \textit{calibrated prediction sets} in \SEtwo~ that provably (marginally) contain the resulting configuration --- despite aleatoric \textit{and} epistemic uncertainty.
    \item We perform simulation and hardware experiments to support our theoretical claims, and demonstrate an increase in prediction region \textit{volume-efficiency} and \textit{representation quality} relative to state-of-the-art baselines. 
    \item We open-source our implementation (real and sim)\footnote{Available at: \url{https://github.com/UM-ARM-Lab/claps_code}\label{code_url}}.
\end{enumerate}

\vspace{-1mm}
\section{RELATED WORK}\vspace{-1.5mm}

\looseness-1
Safety-critical control has a rich history, spanning a broad spectrum of methods including reachability analysis \cite{bansal2017hamilton}, tube-based MPC \cite{oestreich2023tube}, control barrier functions \cite{ames2019control}, and, more recently, learning-based approaches \cite{brunke2022safe}. 
Data-driven tools have been proposed to loosen some of the classical assumptions, enabling the estimation of epistemic uncertainty through ensembles \cite{wang2024providing} or reconstruction losses \cite{nagami2024state}, and of aleatoric uncertainty by the posterior variance of Gaussian Processes \cite{khan2021safety}.
However, this often comes at the cost of providing looser guarantees, due to a reliance on \textit{uncalibrated uncertainty estimates}.
Despite impressive results, it is still challenging to rigorously construct \textit{calibrated uncertainty regions} when the stochastic disturbances and model errors are possibly unbounded and uncharacterized. 
Our data-driven probabilistically-valid prediction sets require fewer assumptions than classical approaches and could be used e.g., to inform the safety certificates of CBFs \cite{zhang2025conformal}.

\looseness-1
CP has been increasingly used in robotics to construct marginally safe trajectories, with applications to social navigation \cite{lindemann2023safe,dixit2023adaptive}, expert imitation \cite{sun2023conformal}, barrier-function synthesis \cite{zhou2024safety}, safety-filter creation \cite{strawn2023conformal}, failure detection \cite{luo2024sample}, and state estimation through perception \cite{yang2023safe}.
By relying on calibration data to infer the accuracy of an available model, CP alleviates the need for strong distributional assumptions.
Existing approaches treat robot configurations as Euclidean vectors, defining error metrics through vector differences between point predictions and ground-truth configurations.
Ignoring the inherent symmetry of robotic systems can result in overly-conservative prediction regions, impacting downstream task performance.
In \cite{marques2024quantifying}, an uncertainty-aware calibration procedure was introduced but states were still viewed as Euclidean, limiting the prediction regions to be (convex) hyperellipsoids.
Instead, we propose a symmetry-aware error metric and uncertainty calibration procedure leading to (possibly non-convex) regions that are potentially more \textit{volume-efficient}, while still containing the true configurations at the desired probability.
Our work can be seen as \textit{complementary} to existing CP pipelines since the proposed alternative score and region construction procedure could potentially be used with other CP algorithms.

\looseness-1
Lie groups have been used to represent and propagate robotic configuration uncertainty in \cite{brossard2017unscented,schuck2024reinforcement, long2013banana}.
\looseness-1 
While the Invariant EKF (InEKF) has been empirically shown to better represent some systems' uncertainty than SS alternatives \cite{barrau2016invariant,barrau2018invariant}, its uncertainty estimates are still \textit{uncalibrated}, not being sufficient for provably safe control. We provide probabilistic guarantees despite model mismatch.
Often Lie group motion is treated as quasi-static, with actions being delta-poses.
Instead, we consider the impact of disturbances and analyse dynamical systems, which require a different formulation respecting inertia. 
While the control community has used Lie group dynamics for reference trajectory tracking \cite{jang2023convex,tayefi2019logarithmic} and wheel slippage estimation \cite{yu2023fully}, current methods enforce motion constraints in heuristic and system-specific ways (zeroing out specific terms), decoupling the impact of angular uncertainty in positional uncertainty. Instead, we use the theoretical contributions of \cite{bloch2009quasivelocities,bloch2015nonholonomic} --- presented concisely in $\S$\ref{sec:convert_to_EPS} --- enabling the conversion of some nonoholonomc dynamical systems to Lie group form, and the propagation of uncertainty along the constraint manifold.

\vspace{-1mm}
\section{PROBLEM STATEMENT}\label{sec:prob_statement}\vspace{-2mm}
Let $q \in \mathcal Q$ be an $n$-dimensional vector denoting the C-Space configuration (in \textit{generalized coordinates}) of a robot with $d$ degrees of freedom $(d \le n)$, $\dot q \in T_q\mathcal Q$ be the generalized velocity, and $s:=(q,\dot q) \in T\mathcal Q$ its state.
We consider both holonomic systems and robots subject to \textit{nonholonomic constraints}, i.e., non-integrable constraints on the allowable velocities.
These motion constraints are often expressed in Pfaffian form as $A(q)\dot q=0$, where $A(q) \in \mathbb R^{k\times n}$ is a configuration-dependent full-rank constraint matrix of $k$ constraints (one per row, where $k<d$).
We observe the true dynamical system state at 
discrete points $s_k:=s(k\dt)$ where $k \in \mathbb N$ and $\dt$ is the sampling time between measurements.
We consider systems with time-invariant stochastic dynamics whose $\mathcal Q$ can be represented by the matrix Lie groups \SEtwo~and \SEthree.
This class is broad, encompassing unicycles, differential drive vehicles, car-like systems, quadrotors, surface vessels, underwater vehicles, satellites,  quadrupeds modeled by their center of mass, and so on.
As these matrix groups are non-compact and non-commutative, their theoretical analysis poses additional challenges compared to groups like $SO(2)$. We write the \textit{true unknown dynamics} of the observable discrete process as
\vspace{-2mm}\begin{equation}
    s_{k+1}=\dynamics(s_k, u_k, w_k), \quad w_k \sim \noiseDist \label{eq:true_dyn_discrete} ,\vspace{-2mm}
\end{equation} 
where $\dynamics$ is an \textit{unknown} deterministic function, $w_k$ is a stochastic term drawn iid from an \textit{unknown} distribution $\noiseDist$, and $u_k\in \mathbb R^m$ is the control input.
Inaccuracies in modeling $\dynamics$ may arise e.g., from domain shifts between fitting and deployment, and result in \textit{epistemic uncertainty}.
On the other hand, $w_k$ makes the dynamics stochastic, introducing \textit{aleatoric uncertainty}, and may represent external disturbances such as wind gusts, wheel slippage, or terrain bumps.
Without restricting $w_k$ or $\dynamics$, we only consider approximate discrete models $\approxDynamics$ where the prediction uncertainty is modeled as Gaussian.
We can then write $
    (\tilde s_1, \tilde \Sigma_1) = \approxDynamics (s_0, u_0),$ where $\tilde s_1:=\mathbb E(\approxDynamics (s_0, u_0))$ and $\tilde \Sigma_1$ is a covariance matrix. 
To make uncertainty quantification tractable, without imposing distributional error assumptions, we assume access to an uncorrupted dataset of state transitions. 
\begin{assumption} \label{ass:ex} We are given dataset of transitions \mbox{$D_{cal}=\{(s_k, u_k, s_{k+1})\}_{1:N}$}, collected from the same transition distribution $\CPdist$ observed at execution time.
\end{assumption}

Formally, $D_{cal}$ is exchangeable with the test-time transitions $(s_k,u_k,s_{k+1})$\footnote{A random vector $D_{cal}\cup(s_k,u_k,s_{k+1}):=(s_k,u_k,s_{k+1})_{1:N+1}$ is exchangeable if its elements are equally likely to appear in any ordering.}, which is a weaker requirement than iid (iid $\Rightarrow$ exchangeable).
Hence, the $D_{cal}$ transitions cannot, for example, be collected along a single robot trajectory, as then they would be time-correlated and no longer exchangeable.
While we implicitly assume access to test-time conditions (to sample from $\CPdist$) and that $D_{cal}$ can be safely collected, our theoretical guarantees are non-asymptotic in nature.

Our goal is to provide, for a given admissible action $u_{des}$, a C-Space prediction region $\CPpredRegionQ \subseteq \mathcal Q$ that provably contains the resulting true \textit{unknown} system configuration $q_1$ with, at least, a user-defined probability $(1-\alpha)$, i.e. \vspace{-2mm}\begin{equation}
    \mathbb P(q_1 \in \CPpredRegionQ) \ge (1-\alpha),\label{eq:safe_condition} \vspace{-2mm}
\end{equation}
\looseness-1 where $\alpha \in (0,1)$ is the user-set acceptable failure-probability.
Following CP literature, the likelihood is taken on average over the test-time scenarios, not for a specific $q_1$, and we assume the initial system state to be known (i.e., $\tilde s_0=s_0$).
While purely achieving \eqref{eq:safe_condition} is trivial, e.g., by predicting the entire space $\CPpredRegionQ = \mathcal Q$, we additionally want $\CPpredRegionQ$ to be \textit{as tight/volume-efficient as possible} to make it practical for downstream robotic tasks such as safe control.
This is a challenging problem. We consider both \textit{aleatoric and epistemic uncertainty}, and do not make strong assumptions about the fidelity of $\approxDynamics$,  or the nature of the stochastic disturbances.
While we make no claims about how efficient our prediction regions are, we show they can be tighter than existing methods.

\vspace{-2mm}
\section{BACKGROUND}\vspace{-2mm}
Before introducing our method, let us briefly cover the background material needed to prove our main contributions. 

\vspace{-1mm}
\subsection{Lagrangian Mechanics for Nonholonomic Systems}
\vspace{-1mm}

A robot's Lagrangian $\Lagrangian:T\mathcal Q\to \mathbb R$ is often given by $\Lagrangian(q,\dot q):= T(q,\dot q)- V(q)$, where $T$ and $V$ are the kinetic and potential energies respectively
\cite{bloch2015nonholonomic,marsden2013introduction}.
Then, the dynamics of a nonholonomic system are given by the \textit{forced Lagrange-d'Alembert equations}
\cite{bloch2015nonholonomic}:
$
    \frac{d}{dt}\frac{\partial \Lagrangian}{\partial \dot q} = \frac{\partial \Lagrangian}{\partial q} +  \inputMatrix(q) u + A(q)\transpose \lambda, \quad \constMatrix(q)\dot q=0,
$
where the force map $B(q) \in \mathbb R^{n\times m}$ converts control inputs $u$ to generalized forces, and $\lambda \in \mathbb R^k$ holds Lagrange multipliers required to enforce the constraints.

\subsection{Dynamics of Nonholonomic Matrix Lie group Systems}\vspace{-1mm}

Consider that $q$ can be described by a $d$-dimensional matrix Lie group $\lGroup$. Let $T_\lGroupEle\lGroup$ denote the tangent space at group element $\lGroupEle\in \lGroup$.
The tangent space at the identity element $\mathfrak e$, $T_{\mathfrak e}\lGroup$, is called the Lie algebra $\lAlgebra$.
The vee operator $(\cdot )^\vee: \mathfrak g  \rightarrow \mathbb R^d$ denotes an isomorphism between $\lAlgebra$ and a $d$-dimensional Euclidean vector space $\mathbb R^d$.
The wedge operator denotes the inverse isomorphism $(\cdot )^\wedge :\mathbb R^d \rightarrow \lAlgebra$.
We term that $\xi \in \mathbb R^d$ is expressed in \say{exponential} coordinates.
For $SE(i)$, both $\lGroupEle \in \lGroup$ and $\xi^\wedge \in \lAlgebra$ are representable by $\mathbb R^{(i+1)\times (i+1)}$ matrices, and the underlying system has $d=\dim(SE(i))=i +\frac{1}{2}i(i-1)$ DOFs.
For convenience, we denote the left group multiplication map as $L_g:h\mapsto gh, \forall g,h\in \lGroup$ \cite{lee2010introduction}.
The conventional matrix exponential maps from Algebra to group $\exp_m:\lAlgebra \to \lGroup$ and the matrix logarithm from group to Algebra $\log_m :\lGroup \to \lAlgebra$.
Then, the map from exponential vectors to group elements can be written as $\exp: \mathbb R^d \rightarrow \lGroup, \xi \mapsto\exp_m(\xi^\wedge)=\lGroupEle$, and the map from group elements to exponential vectors by $\log:\lGroup \rightarrow \mathbb R^d,\lGroupEle\mapsto\log_m(\lGroupEle)^\vee=\xi$. 
The \textit{ad operator} $\textrm{ad}_\xi : \lAlgebra \to \lAlgebra$ is a linear map describing how elements of $\lAlgebra$ act on each other \cite{marsden2013introduction}. It can be defined as
$
    \textrm{ad}_\xi(\eta)=[\xi^\wedge,\eta^\wedge]= \xi^\wedge\eta^\wedge -\eta^\wedge\xi^\wedge
$,
where $\xi^\wedge,\eta^\wedge\in \lAlgebra$ and $[\cdot,\cdot]$ is the matrix Lie bracket \cite{bloch2015nonholonomic}.

Let $\Lagrangian:T\lGroup \rightarrow \mathbb R$ denote a left-invariant Lagrangian, and $l: \lAlgebra \rightarrow \mathbb R$ its reduction to $\lAlgebra$, i.e., $l(\xi):=\Lagrangian(\mathfrak e,\xi)$.
When $l$ includes only kinetic energy, it becomes 
$l(\xi)=\frac{1}{2}\xi\transpose \lInertiaMatrix \xi$,
where $\xi \in \mathbb R^d$ is the twist in body-fixed coordinates, and $\lInertiaMatrix \in \mathbb R^{d\times d}$ the generalized inertia matrix. The twists in a controlled system evolve according to the \textit{forced Euler-Poincar\'e (EP) equations} \cite{bloch2015nonholonomic}:
$
    \frac{d}{dt}\frac{\partial l}{\partial \xi} = \lCoad\left(\frac{\partial l}{\partial \xi}\right) + \lInputMatrix u, 
$
where $u\in \lAlgebra^*$ is the control input, $\lInputMatrix$ maps $u$ to body-fixed forces or torques, $\lCoad : \lAlgebra^* \to \lAlgebra^*$ is the dual of the \textit{ad operator}, and $\lAlgebra^*$ the Lie coalgebra\footnote{One can think of $\xi\in \lAlgebra$ as $\mathbb R^{d\times 1}$ column vectors, and $u\in\lAlgebra^*$ as $\mathbb R^{1\times d}$ row vectors.
Following past literature, we write both elements as column vectors, since $\lAlgebra \cong \lAlgebra^*$, implying that $\langle u,\xi \rangle := u\transpose \xi \in \mathbb R$.}.
For matrix Lie groups, we have the useful property that $\lCoad = \textrm{ad}_\xi\transpose$ \cite{bullo2019geometric}.
For the reduced Lagrangian described, the EP Eqs. become $\lInertiaMatrix \dot \xi =\textrm{ad}_\xi\transpose \lInertiaMatrix \xi + \lInputMatrix u$, where $\lInertiaMatrix \dot \xi$ represents inertia, and $\textrm{ad}_\xi\transpose \lInertiaMatrix \xi$ both the Coriolis and centripetal terms.
Configurations evolve according to the \textit{reconstruction equation} \cite{bloch2015nonholonomic}
\vspace{-2mm}\begin{equation}\label{eq:reconstruction}
    \dot \lGroupEle =\lGroupEle\xi^\wedge .\vspace{-2mm}
\end{equation}
Notice that $\lCoad$ makes the EP equations nonlinear in $\xi$ and that Eq. \eqref{eq:reconstruction} introduces coupling between the different dimensions. 
The system state in Lie group dynamics takes the form $s_k:=(g_k,\xi_k) \in \lGroup \times \lAlgebra$.
For systems subject to nonholonomic constraints, the twists evolve according to the  \textit{forced Euler-Poincar\'e-Suslov (EPS) equations} \cite{bloch2015nonholonomic}
\vspace{-2mm}\begin{equation} \label{eq:eps_motion}
    \frac{d}{dt}\frac{\partial l}{\partial \xi} = \lCoad\left(\frac{\partial l}{\partial \xi}\right) + \lInputMatrix u + \lConstMatrix\transpose \lambda, \quad \lConstMatrix\xi = 0,
\end{equation}
\looseness-1 where $\lConstMatrix \in \mathbb R^{k\times d}$ describes the $k$ nonholonomic constraints in body-fixed coordinates, and $\lambda \in \mathbb R^k$ holds Lagrange multipliers. The configurations still evolve according to Eq. \eqref{eq:reconstruction}.

\vspace{-2mm}
\subsection{Split Conformal Prediction} \label{sec:split-cp}\vspace{-1mm}
We focus on Split Conformal Prediction (SplitCP) due to its computational efficiency and suggest \cite{angelopoulos2024theoretical} for a deeper understanding of the field. 
Let an unknown (stochastic) process map from input space $\CPmodelInputSpace$ to output space $\CPoutputSpace$.
Let $\CPmodel : \CPmodelInputSpace \to \CPmodelPredSpace$ be a fixed
model --- analytical or learned, deterministic or stochastic --- mapping $\CPmodelInputSpace$ to prediction space $\CPmodelPredSpace$. 
Note that the prediction and output spaces are not necessarily the same, i.e., generally $\CPmodelPredSpace \neq \CPoutputSpace$.
SplitCP assumes access to a (calibration) dataset of process input and outputs $D_{cal}:=\{ (\CPmodelinput, \CPoutput) \}_{1:N}$, where $\CPmodelinput \in \CPmodelInputSpace, \CPoutput\in \CPoutputSpace$, and $(\CPmodelinput, \CPoutput)\sim \CPdist$ for an arbitrary unknown distribution $\CPdist$.
Then, we can construct a scalar-valued non-conformity score $r: \CPmodelPredSpace \times \CPoutputSpace \to \mathbb R$ measuring disagreement between model predictions and process outputs.
By applying $\CPmodel$ and $r$ to $D_{cal}$, we can construct a set of residuals $R_{cal}:=\textrm{sort}(\{r(\CPmodelpred, \CPoutput)\}_{1:N})$, where $\CPmodelpred:=\CPmodel(\CPmodelinput)$ and smaller values indicate greater model accuracy.
Then, given a new test tuple $(\CPmodelinput,\CPoutput)\sim \CPdist$ where $\CPmodelinput$ is known but $\CPoutput$ unobserved, SplitCP provides the following \textit{marginal coverage guarantee}
\vspace{-2mm}\begin{equation}
    \mathbb P\{r(\CPmodelpred, \CPoutput)\le \hat q_\alpha\} \ge (1-\alpha), \vspace{-2mm}
\end{equation}
where $\hat q_\alpha \in \mathbb R$ is the $\lceil(1-\alpha)(n+1)\rceil$ element of $R_{cal}$, and $\alpha \in (0,1)$ the user-defined acceptable failure probability.
Note that the test tuple must be drawn from the same distribution $\CPdist$ as the calibration data, and that the coverage guarantee holds non-asymptotically.
It directly follows that the prediction region $\CPpredRegion_\alpha$ defined below is marginally valid
\vspace{-2mm}\begin{equation}
    \CPpredRegion_\alpha(\CPmodelinput) := \{ \CPoutput \in \CPoutputSpace : r(\CPmodelpred, \CPoutput) \le \hat q_\alpha \} ,\vspace{-2mm}
\end{equation}
that is, $\mathbb P(\CPoutput\in \CPpredRegion_\alpha(\CPmodelinput)) \ge (1-\alpha)$ where the probability is averaged over the test-time conditions, not for a specific $y$.
While the marginal guarantees are agnostic to the choice of $\CPmodel$ and $r$, these still impact how \textit{efficient} the prediction regions are, and hence SplitCP's practical utility.

\section{CLAPS}\vspace{-2mm}

\looseness-1 We present our algorithm for constructing \textit{calibrated prediction regions} \reviewChange{that provably contain} the unknown robot configuration $q_1$ at the specified likelihood, on average over the test-time conditions, despite both epistemic and aleatoric uncertainty.
We also define a symmetry-respecting error metric for \SEtwo~ that can lead to increased region \textit{efficiency}.
Figure \ref{fig:clasps_model} provides an overview of the approach and Alg. \ref{alg:CLASPS} more detail.

\begin{figure*}[t!]
    \vspace{2mm}
  \centering
  \includegraphics[width=1.0\linewidth]{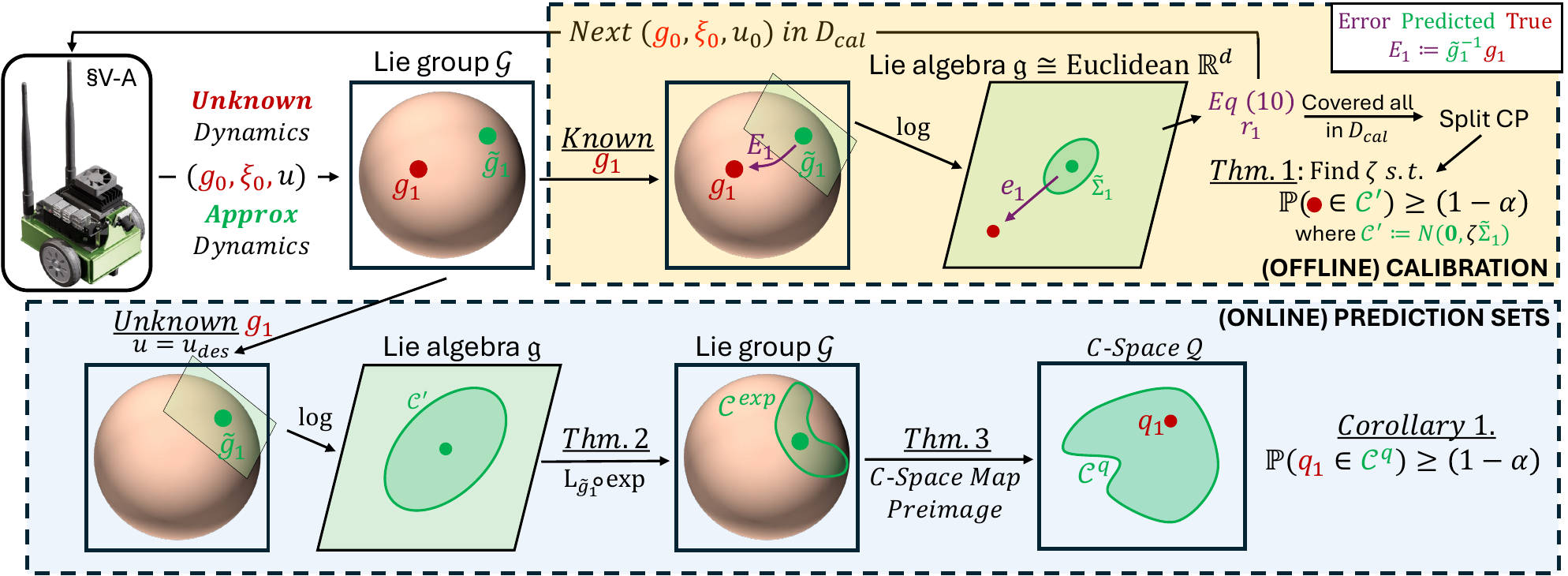}
  \vspace{-7mm}
  \caption{\longMethodName{}$\mid$ Offline: a dataset of state transitions is used jointly with an approximate dynamical model to derive a rigorous symmetry-aware probabilistic error bound on the configuration predictions. Online: our algorithm takes in a desired action $u_{des}$ and computes a \textit{calibrated C-Space prediction region} $\CPpredRegionQ$ that is marginally guaranteed to contain the true configuration resulting from executing $u_{des}$.}
   \label{fig:clasps_model}
   \vspace{-5.5mm}
\end{figure*}

\vspace{-1mm}
\subsection{Converting State Space Dynamics to Lie group Form}\label{sec:convert_to_EPS}\vspace{-1mm}

Following the theoretical contributions of
\cite{bloch2009quasivelocities,bloch2015nonholonomic}
we provide the main equations for converting nonholonomic SS dynamics to Lie group form\footnote{\noindent Appendix\ref{sec:APPconvert_to_EPS} includes
a complete, self-contained presentation of dynamics conversion from SS to Lie form.}. Let the \textit{kinematics map} $\FKmap:\mathcal Q \to \lGroup$ map generalized coordinates $q$ to elements $g \in \lGroup$.
For $SE(i)$, we have
\vspace{-4mm}\begin{equation}\label{eq:FK_map}
    g:=\FKmap(q)= \begin{bmatrix}
        R(q) & t(q)\\ 0 & 1
    \end{bmatrix} \in  \lGroup ,\vspace{-2mm}
\end{equation}
\looseness-1
where $t(q) \in \mathbb R^i$ is a translation vector, $R(q) \in \textrm{SO}(i)$ a rotation matrix, and $K\in \mathbb R^{(i+1)\times (i+1)}$.
We can then write the \textit{body-Jacobian} as 
\vspace{-2mm}\begin{equation}
    \lTwistMap(q) :=\left[\left(g^{-1}\frac{\partial K}{\partial q^1} \right)^\vee, \;\dots,\; \left(g^{-1}\frac{\partial K}{\partial q^n} \right)^\vee \right] \in \mathbb R^{d\times n}, \vspace{-2mm}
\end{equation} which enables the following velocity relationships 
\vspace{-2mm}\begin{equation}\label{eq:twist_maps_dq_xi}
    \xi  = \lTwistMap(q) \dot q, \quad \text{and}\quad \dot q = \lTwistMap(q)^\dagger \xi , \vspace{-2mm}
\end{equation}
where $(\cdot )^\dagger$ denotes the Moore-Penrose pseudoinverse.
Given SS Pfaffian velocity constraints $A(q)\dot q=0$, their Lie form becomes $\lConstMatrix(q) :=\constMatrix(q) \lTwistMap(q)^\dagger$.
Similarly, the body-frame inertia matrix is now $\lInertiaMatrix(q) := (\lTwistMap(q)^\dagger)\transpose M(q) (\lTwistMap(q)^\dagger)$, and the control input map  $\lInputMatrix(q) := (\lTwistMap(q)^\dagger)\transpose \inputMatrix(q)$.
This enables converting a some symmetric nonholonomic system from State Space form $(q,\dot q)$ to Lie group form $(g,\xi)$. 
\reviewChange{We present a numerical method to speedup the numerical integration of nonholonomic systems in Lie group form (in Appendix\ref{sec:APP-trickSpeedUpComp}), and numerically validate $\S$\ref{sec:convert_to_EPS} on a \textit{second-order unicycle} (in Appendix\ref{sec:App-numericalValidationDyn}).}
\color{black}

\vspace{-3mm}
\setlength{\textfloatsep}{0pt}
\begin{algorithm}[t!] \label{alg:CLASPS}
\setlength{\algomargin}{0pt}

\caption{\methodName}
\SetInd{0.2em}{0.5em}
\SetAlgoLined
\DontPrintSemicolon

\KwIn{$\tilde{f}, D_{cal}, \alpha,u_{des}$}
{\footnotesize\tcc{(Offline Calibration) Once for all $u_{des}$}}
\For()
{$(s_0, u_0, s_{1})^{(i)} \in D_{cal}$}
{
$\tilde s_1^{},\tilde \Sigma_1 \leftarrow \approxDynamics(s_0, u_0)$ \footnotesize{\tcp*[f]{Approximate prediction}}
\\
    $r_1\leftarrow \sqrt{\log(\tilde g_1^{-1}g_1)\transpose \tilde \Sigma_1^{-1} \log(\tilde g_1^{-1}g_1)}$ \footnotesize{\tcp*[f]{Score, Eq \eqref{eq:r1_score}}}
}
 $\zeta \leftarrow$ {\footnotesize\texttt{SplitCP}($R_{cal}, \alpha$)} \footnotesize{\tcp*[f]{Conformal scaling factor}}\\ 
{\footnotesize\tcc{(Online Prediction) Once per each $u_{des}$}}
$\tilde s_1^{},\tilde \Sigma_1 \leftarrow \approxDynamics(s_0, u_{des})$ \footnotesize{\tcp*[f]{Approximate prediction}}\\
$\CPpredRegionPrime \leftarrow N(0,\zeta \tilde \Sigma_1)$ \footnotesize{\tcp*[f]{Calibrated Lie algebra Set}}\\

\end{algorithm}
\color{black}
\vspace{1mm}
\subsection{Formally Calibrating One-Step Action  Uncertainty}\label{sec:theorems_safety}
\vspace{-1mm}
\looseness-1
Given an initial state $s_0$ and a commanded action $u_{des}$, the available approximate model $\approxDynamics$ returns an expected next state $\tilde s_1$  and an uncertainty covariance $\tilde \Sigma_1$.
In this section, we derive a symmetry-informed and uncertainty-aware error metric $r_1$ between the prediction $\tilde g_1$ and the true unobserved resulting configuration $g_1$. Then, via SplitCP, we obtain a probabilistic upper bound $\hat q_\alpha$ for the test-time error. Subsequently, we construct a symmetry-respecting prediction region proved to marginally contain $q_1$ at the user-defined probability. \break
\indent \looseness-1 Existing work has treated robot states $s_k=(q_k,\dot q_k)$ as Euclidean, with both point predictions and uncertainty estimates (if existing) lying in SS.
Instead, we respect the natural symmetry of the robot and use \eqref{eq:FK_map} and \eqref{eq:twist_maps_dq_xi} to map $s_0$ to Lie group form $s_0=(g_0,\xi_0)$.
Then, given $u_{des}$, either by using a Lie group-$\approxDynamics$, or by using a State Space-$\approxDynamics$ and subsequently converting back to Lie form, we can obtain an expected next state  $(\tilde g_1, \tilde \xi_1):= \mathbb E[\tilde f(g_0,\xi_0,u_0)]$. Here $\tilde g_1\in \lGroup$ is the expectation of the pose, and $\tilde \xi_1 \in \lAlgebra$ of the twist. 
Given the objective of building a region containing the true unknown configuration, a natural error metric between the true Lie configuration $g_1$ and the expected prediction $\tilde g_1$ is the group difference $E_1:=\tilde g_1^{-1} g_1 \in \lGroup$. This represents the left-invariant displacement required to go from $\tilde g_1$ to $g_1$.
Yet this is only a point-wise metric that does not account for the estimated uncertainty. 
\newline\indent
As \cite{marques2024quantifying} used an estimated state-space covariance $\tilde \Sigma_1$ to construct an uncertainty-aware error metric, we make a similar choice but instead place the uncertainty covariance in exponential coordinates $\mathbb R^d$, centered at the origin. 
Physically, our $\tilde \Sigma_1$ represents estimated body-fixed displacements and rotations caused by aleatoric disturbances.
Then, we can map $E_1$ to exponential coordinates through $e_1 := \log (E_1) \in \mathbb R^d$ and define the symmetry-based uncertainty-aware non-conformity score $r_1$ to be the Mahalanobis Distance between a configuration $g$ and the Gaussian Prediction $(\tilde g_1, \tilde \Sigma_1)$ as
\vspace{-2mm}\begin{equation}\label{eq:r1_score}
   r_1(g;\tilde g_1,\tilde \Sigma_1) = \sqrt{\log(\tilde g_1^{-1}g)\transpose \tilde \Sigma_1^{-1} \log(\tilde g_1^{-1}g)} .\vspace{-2mm}
\end{equation}
\looseness-1 As the exponential coordinates are a vector space, we can compute $r_1$ efficiently, while scores defined directly in the group could be more challenging to compute tractably.

\looseness-1
By looping over each item in $D_{cal}$ and calculating its non-conformity score $r_1$, we can build a set of scalars $R_{cal}$ and calculate $\hat q_\alpha$.
Applying the results of \cite{marques2024quantifying}, valid for Euclidean vector spaces, we can then \textit{calibrate the approximate uncertainty prediction} $N(0, \tilde \Sigma_1)$ in exponential coordinates, by scaling its covariance through the \textit{conformal scaling factor} $\zeta := \hat q_\alpha^2/\chi^2_\alpha(\dim\lAlgebra) \in \mathbb R$ into $N(0,\zeta \tilde \Sigma_1)$\footnote{$\chi^2_\alpha(\dim \lAlgebra)$ denotes the $(1-\alpha)$-quantile of the $\chi^2$ distribution of dimension $\dim \lAlgebra$. See \cite{marques2024quantifying} for more intuition into the \textit{scaling factor} $\zeta$.}. Then the $(1-\alpha)$ confidence region of $N(0,\zeta \tilde \Sigma_1)$ will contain at least $100(1-\alpha)\%$ of the true unknown $e_1$. Formally we have:
\begin{theorem}[Thm 2 of \cite{marques2024quantifying}] \label{thm:cp-xi-coverage} Let $e_1 \in \mathbb R^d$, $\zeta \in \mathbb R$ be defined as above, and let $\CPpredRegionPrime$ be the $100(1-\alpha)\%$ confidence region of $N(0, \zeta \tilde \Sigma_1)$. Then $\mathbb P(e_1\in \CPpredRegionPrime) \ge (1-\alpha)$.
\end{theorem}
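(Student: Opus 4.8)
The plan is to reduce the claim to the standard SplitCP marginal coverage guarantee recalled in $\S$\ref{sec:split-cp}, by showing that the event $\{e_1 \in \CPpredRegionPrime\}$ is \emph{exactly} the event $\{r_1(g_1;\tilde g_1,\tilde\Sigma_1)\le \hat q_\alpha\}$, whose probability is already known to be at least $(1-\alpha)$. Everything then comes down to unwinding the definition of the Gaussian confidence region and the definition of $\zeta$.

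First I would instantiate SplitCP with input $x=(s_0,u_0)$, output $y=g_1$, model $\CPmodel:(s_0,u_0)\mapsto(\tilde g_1,\tilde\Sigma_1)$, and non-conformity score $r_1$ of Eq.~\eqref{eq:r1_score}. Each calibration score is a deterministic function of a tuple $(s_0,u_0,s_1)\in D_{cal}$, and by Assumption~\ref{ass:ex} these tuples are exchangeable with the test tuple, so the scores $\{r_1^{(i)}\}_{1:N}$ together with the (unobserved) test score are exchangeable. Hence, with $\hat q_\alpha$ the $\lceil(1-\alpha)(N+1)\rceil$-th order statistic of $R_{cal}$, the SplitCP guarantee yields $\mathbb P\{r_1(g_1;\tilde g_1,\tilde\Sigma_1)\le \hat q_\alpha\}\ge (1-\alpha)$.

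Second I would make the confidence region explicit. For a zero-mean Gaussian $N(0,\Sigma)$ on $\mathbb R^d$ with $\Sigma\succ 0$, the quadratic form $X\transpose\Sigma^{-1}X$ is $\chi^2$-distributed with $d$ degrees of freedom, so its $100(1-\alpha)\%$ confidence region is the ellipsoid $\{x\in\mathbb R^d : x\transpose\Sigma^{-1}x\le\chi^2_\alpha(d)\}$; applying this with $\Sigma=\zeta\tilde\Sigma_1$ and $d=\dim\lAlgebra$ gives $\CPpredRegionPrime=\{x : x\transpose(\zeta\tilde\Sigma_1)^{-1}x\le\chi^2_\alpha(\dim\lAlgebra)\}$. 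Third I would chain the definitions: since $e_1=\log(\tilde g_1^{-1}g_1)$, the membership $e_1\in\CPpredRegionPrime$ is equivalent to $\tfrac{1}{\zeta}\log(\tilde g_1^{-1}g_1)\transpose\tilde\Sigma_1^{-1}\log(\tilde g_1^{-1}g_1)\le\chi^2_\alpha(\dim\lAlgebra)$, i.e.\ to $r_1(g_1;\tilde g_1,\tilde\Sigma_1)^2\le\zeta\,\chi^2_\alpha(\dim\lAlgebra)=\hat q_\alpha^2$ by the definition $\zeta:=\hat q_\alpha^2/\chi^2_\alpha(\dim\lAlgebra)$, hence (both sides being nonnegative) to $r_1(g_1;\tilde g_1,\tilde\Sigma_1)\le\hat q_\alpha$. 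Combining this equivalence with the bound from the first step gives $\mathbb P(e_1\in\CPpredRegionPrime)=\mathbb P\{r_1(g_1;\tilde g_1,\tilde\Sigma_1)\le\hat q_\alpha\}\ge(1-\alpha)$.

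I do not expect a genuine obstacle here; the only points needing care are bookkeeping ones: $\tilde\Sigma_1\succ0$ so that the Mahalanobis form and the ellipsoid are well defined, and that the matrix logarithm used to form $e_1$ is taken on a branch on which $\log(\tilde g_1^{-1}g_1)$ is single-valued, used consistently at calibration and at test time. Granting those, the proof is essentially a restatement of Thm.~2 of \cite{marques2024quantifying}; the step worth spelling out is the $\chi^2$ characterization of the Gaussian confidence region, since that is where the constant $\chi^2_\alpha(\dim\lAlgebra)$ — and therefore the precise choice of $\zeta$ — enters and makes the two events coincide.
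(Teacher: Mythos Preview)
Your proposal is correct and is exactly the argument one would expect: the paper itself does not give a proof but simply writes ``See Theorem 2 of \cite{marques2024quantifying} for the proof,'' and your reduction to the SplitCP guarantee via the $\chi^2$ characterization of the Gaussian confidence ellipsoid and the identity $\zeta\chi^2_\alpha(\dim\lAlgebra)=\hat q_\alpha^2$ is precisely the computation underlying that cited result, transplanted verbatim to the exponential-coordinate vector space. There is nothing to add beyond the bookkeeping caveats you already flag ($\tilde\Sigma_1\succ0$, $\hat q_\alpha\ge 0$, consistent branch of $\log$).
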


\looseness-1
See Theorem 2 of \cite{marques2024quantifying} for the proof.
While this guarantees the marginal probabilistic containment of the true unknown error vector $e_1$ in exponential coordinates, no matter the fidelity of $\approxDynamics$ producing $\tilde \Sigma_1$, we aim to contain the true configuration $q_1$ in $\CPpredRegionQ$.
To bridge this gap, we consider how the distribution of $e_1$ propagates through the sequence of maps 
$\mathbb R^d \xrightarrow{(\cdot )^\wedge} \lAlgebra \xrightarrow{\exp_m} \lGroup \xrightarrow{L_{\tilde g_1}}\lGroup \xrightarrow{K^{-1}} \mathcal Q$.
While $\exp_m$ transports to group elements near the identity $\mathfrak e$, $e_1$ was defined relative to the expected configuration $\tilde g_1$.
Thus, to correctly recenter the distribution of configurations in $\lGroup$, we also apply a left-translation $L_{\tilde g_1}$, shifting the predicted region's center to be at $\tilde g_1 \in \mathcal G$.
Let us also analyze how $\CPpredRegionPrime$ can be transformed to C-space.

\looseness-1 The wedge operator is a \textit{diffeomorphism}, a continuously differentiable and bijective map with a differentiable inverse.
While $\exp_m$ is not globally injective, there exists an open neighborhood $U\subset \lAlgebra$ around the origin where $\exp_m {\mid_U}$ is diffeomorphic \cite{lee2010introduction,hall2013lie}\footnote{For \SEtwo, $U$ includes twists with angular component $\lvert \theta \rvert <\pi$ \cite{barfoot2024state}.
}.
Since every composition of diffeomorphisms is a diffeomorphism \cite{lee2010introduction}%
, then $\exp\mid_U$ is also a local diffeomorphism in $(U)^\vee$.
Further, as $\log_m$ is the inverse of $\exp_m {\mid_U}$ \cite{hall2013lie}, 
then by construction $e_1\in U^\vee$.  
Let $\phi:=L_{\tilde g_1}\circ \exp$. We can then relate exponential error vectors to true group configurations using $\phi(e_1) = L_{\tilde g_1} \circ \exp(e_1)=\tilde g_1  \exp(\log(\tilde g_1^{-1}g_1))=\tilde g_1  \tilde g_1^{-1}g_1=g_1$, where $\exp$ and $\log$ canceled only because they are mutually inverse where $e_1$ lives.
To map the prediction region, note that $\CPpredRegionPrime$ may extend beyond $U^\vee$ after scaling by $\zeta$.
To ensure it lies in the bijective domain, we can clip the region as $\CPpredRegion := \CPpredRegionPrime \cap U^\vee$.
This does not affect the results from Theorem \ref{thm:cp-xi-coverage}, as $e_1\in U^\vee$ by construction, which implies $\{e_1\in \CPpredRegionPrime\}\Leftrightarrow \{e_1\in \CPpredRegion\}$ and hence $\mathbb P(e_1 \in \CPpredRegionPrime) = \mathbb P(e_1 \in \CPpredRegion)$.
As the left-translation $L_{\tilde g_1}$ is globally diffeomorphic \cite{lee2010introduction}, 
 $\phi$ is still diffeomorphic. 
Let the mapped region be $\CPpredRegionExp := \phi(\CPpredRegion)$.
The \textit{preimage} of $\CPpredRegionExp$ is by definition $\phi^{-1}(\CPpredRegionExp):= \{\xi \in \mathbb R^d :\phi(\xi) \in \CPpredRegionExp \}$.
Then, from set theory, for an arbitrary set $\CPpredRegion$ and map $\phi$ we have the following inclusion relation $\CPpredRegion\subseteq \phi^{-1}(\phi(\mathcal C))$ \cite{lee2010introduction}.
The equality occurs iff $\phi$ is bijective, so we have $\CPpredRegion=\phi^{-1}(\phi(\CPpredRegion))$
Again, $\phi^{-1}$ is the preimage, not the inverse (which generally might not exist).
Without considering $U$, we would obtain the less tight inequality, since many Algebra elements map to the same group element (due to angular symmetry).
We can use these set relations to show:

\begin{theorem} \label{thm:cp-X-coverage}
Let $\phi,\CPpredRegionExp$ be defined as above and $\CPpredRegion \subseteq \mathbb R^d$. Then $\mathbb P(\lGroupEle_1 \in \CPpredRegionExp) = \mathbb P(e_1 \in \CPpredRegion)$.
\end{theorem}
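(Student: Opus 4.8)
The plan is to reduce the claim to a set-theoretic identity between the two events, made rigorous by the injectivity of $\phi=L_{\tilde g_1}\circ\exp$ on the neighborhood $U^\vee$ that contains both $e_1$ and the clipped region $\CPpredRegion$. I would first collect the facts already established just above the statement, then promote the generic inclusion $\CPpredRegion\subseteq\phi^{-1}(\phi(\CPpredRegion))$ to an equality once intersected with $U^\vee$, and finally read off equality of the probabilities.

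First I would recall two things, valid for each realization of the test tuple. (i) $\lGroupEle_1=\phi(e_1)$: indeed $\phi(e_1)=\tilde g_1\exp(\log(\tilde g_1^{-1}\lGroupEle_1))=\tilde g_1\,\tilde g_1^{-1}\lGroupEle_1=\lGroupEle_1$, where $\exp$ and $\log$ cancel precisely because $\tilde g_1^{-1}\lGroupEle_1$ lies in the slice of $\lGroup$ on which $\log_m=(\exp_m\mid_U)^{-1}$, equivalently $e_1\in U^\vee$ by construction. (ii) By definition $\CPpredRegionExp=\phi(\CPpredRegion)$, with $\CPpredRegion=\CPpredRegionPrime\cap U^\vee\subseteq U^\vee$. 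Next I would note that $\exp\mid_U$ is a diffeomorphism (hence injective) and $L_{\tilde g_1}$ is a global diffeomorphism, so $\phi$ is injective on $U^\vee$; consequently $\phi^{-1}(\phi(\CPpredRegion))\cap U^\vee=\CPpredRegion$, since any $\xi\in U^\vee$ with $\phi(\xi)\in\phi(\CPpredRegion)$ must coincide with the unique preimage, which lies in $\CPpredRegion$.

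To finish, I would chain: by definition of the preimage, $\lGroupEle_1\in\CPpredRegionExp\iff\phi(e_1)\in\phi(\CPpredRegion)\iff e_1\in\phi^{-1}(\phi(\CPpredRegion))$, using (i); and because $e_1\in U^\vee$, this last membership is equivalent to $e_1\in\phi^{-1}(\phi(\CPpredRegion))\cap U^\vee=\CPpredRegion$ by the previous step. Hence $\{\lGroupEle_1\in\CPpredRegionExp\}$ and $\{e_1\in\CPpredRegion\}$ are literally the same subset of the sample space, so $\mathbb P(\lGroupEle_1\in\CPpredRegionExp)=\mathbb P(e_1\in\CPpredRegion)$.

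The main obstacle is purely the non-injectivity of $\exp_m$: without the clipping to $U^\vee$ one only obtains $\CPpredRegion\subseteq\phi^{-1}(\phi(\CPpredRegion))$, hence the one-sided bound $\mathbb P(\lGroupEle_1\in\CPpredRegionExp)\ge\mathbb P(e_1\in\CPpredRegion)$; so the delicate part is arguing (as sketched before the statement) that restricting to $U^\vee$ is free --- $e_1\in U^\vee$ always, so $\{e_1\in\CPpredRegionPrime\}\Leftrightarrow\{e_1\in\CPpredRegion\}$ --- and that $\phi$ is genuinely a bijection onto its image there. Configurations where $\log$ is ambiguous (e.g.\ $\lvert\theta\rvert=\pi$ for \SEtwo) form a measure-zero set which can be neglected, or excluded by assuming $\noiseDist$ has no atom there.
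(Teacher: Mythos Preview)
Your proposal is correct and follows essentially the same route as the paper: both arguments use $g_1=\phi(e_1)$ and the bijectivity of $\phi$ on $U^\vee$ to promote the generic inclusion $\CPpredRegion\subseteq\phi^{-1}(\phi(\CPpredRegion))$ to an equality, then read off the event identity via the definition of preimage. You are somewhat more explicit than the paper in isolating the intersection with $U^\vee$ and in flagging the measure-zero boundary case, but the logical skeleton is the same.
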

\begin{proof}\looseness-1 
Since $g_1 =\phi(e_1)$ and $\phi$ is bijective, the definition of  preimage gives $\{ e_1\in \CPpredRegion \}\Leftrightarrow \{ e_1\in \phi^{-1}(\CPpredRegionExp)\}\Leftrightarrow \{ \phi(e_1)\in \CPpredRegionExp\}=\{g_1 \in \CPpredRegionExp \}$, and the claim follows.
\end{proof}

By using the preimage, we placed no further requirements on $\exp$ or $\phi$ such as global invertibility or differentiability\footnote{Other works split $\exp$ into multiple diffeomorphic regions to construct a pushforward probability density in the group \cite{falorsi2019reparameterizing}. However, we only require set inclusion for our claims about containing sufficient probability mass.}.
Finally, the \textit{preimage} of the kinematics map, $K^{-1}$, transports from $\lGroup$ to $\mathcal Q$. We can thus obtain:
\begin{theorem}\label{thm:cspace_safe} Let $K, \CPpredRegionExp$ be defined as above and $g_1:= K(q_1)$. For $\CPpredRegionExp \subseteq \lGroup$, its preimage is $\CPpredRegionQ:= K^{-1}(\CPpredRegionExp)$. Then $\mathbb P(q_1 \in \CPpredRegionQ) = P(g_1 \in \CPpredRegionExp)$.
\end{theorem}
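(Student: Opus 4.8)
The plan is to mirror the argument used for Theorem~\ref{thm:cp-X-coverage}, exploiting that the claimed identity is a pure event equality and does not require invertibility of $K$. First I would recall that, by definition of the preimage, $\CPpredRegionQ = K^{-1}(\CPpredRegionExp) = \{\, q \in \mathcal Q : K(q) \in \CPpredRegionExp \,\}$. Combined with $g_1 = K(q_1)$, this yields the chain of equivalences $\{ q_1 \in \CPpredRegionQ \} \Leftrightarrow \{ K(q_1) \in \CPpredRegionExp \} \Leftrightarrow \{ g_1 \in \CPpredRegionExp \}$, so the two events are literally the same subset of the underlying sample space. Taking probabilities of both sides gives $\mathbb P(q_1 \in \CPpredRegionQ) = \mathbb P(g_1 \in \CPpredRegionExp)$, which is the claim.

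The only point that needs a word of care is well-definedness: for $\mathbb P(q_1 \in \CPpredRegionQ)$ to make sense, $\CPpredRegionQ$ must be measurable. Here I would note that $\CPpredRegionExp = \phi(\CPpredRegion)$ with $\CPpredRegion = \CPpredRegionPrime \cap U^\vee$ an intersection of a (Borel) confidence ellipsoid with an open set, and $\phi = L_{\tilde g_1}\circ\exp$ a diffeomorphism, hence $\CPpredRegionExp$ is Borel in $\lGroup$; since the kinematics map $K$ is continuous (indeed smooth), $\CPpredRegionQ = K^{-1}(\CPpredRegionExp)$ is Borel in $\mathcal Q$. Thus no surjectivity or injectivity of $K$ is needed --- only that preimages of measurable sets under a continuous map are measurable --- which is precisely why phrasing everything through preimages, as in Theorem~\ref{thm:cp-X-coverage} and the present statement, keeps the argument assumption-light.

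Finally, I would chain this with Theorems~\ref{thm:cp-xi-coverage} and~\ref{thm:cp-X-coverage} to recover the target guarantee \eqref{eq:safe_condition}: $\mathbb P(q_1 \in \CPpredRegionQ) = \mathbb P(g_1 \in \CPpredRegionExp) = \mathbb P(e_1 \in \CPpredRegion) = \mathbb P(e_1 \in \CPpredRegionPrime) \ge 1-\alpha$, where the third equality uses the clipping identity $\{e_1 \in \CPpredRegionPrime\} \Leftrightarrow \{e_1 \in \CPpredRegion\}$ established before Theorem~\ref{thm:cp-X-coverage}. There is essentially no hard step in proving Theorem~\ref{thm:cspace_safe} itself; if anything, the subtlety is bookkeeping --- ensuring that the recentering via $L_{\tilde g_1}$ and the restriction to $U^\vee$ have already been absorbed into the definition of $\CPpredRegionExp$, so that this last transport to C-space introduces nothing beyond a preimage and the probability mass is carried across verbatim.
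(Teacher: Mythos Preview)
Your proposal is correct and follows essentially the same approach as the paper: the paper's proof is the one-line observation that, by definition of preimage, $\{q_1 \in K^{-1}(\CPpredRegionExp)\} = \{K(q_1) \in \CPpredRegionExp\}$, from which the claim follows. Your added measurability remark and the explicit chaining with Theorems~\ref{thm:cp-xi-coverage} and~\ref{thm:cp-X-coverage} are sound elaborations but go beyond what the paper itself records for this theorem.
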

\begin{proof}
    Using the definition of preimage we get $\{q_1 \in K^{-1}(\CPpredRegionExp) \}=\{K(q_1) \in \CPpredRegionExp \}$, and the claim follows.
\end{proof}

Joining the results from the Theorems \ref{thm:cp-xi-coverage}, \ref{thm:cp-X-coverage}, \ref{thm:cspace_safe} we get:
\begin{corollary} \label{col:all_map}$\mathbb P(q_1 \in \CPpredRegionQ) \ge  (1-\alpha)$.
\end{corollary}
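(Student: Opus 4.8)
The plan is to obtain the claim purely by chaining the three preceding results, so the argument is essentially a bookkeeping exercise in composing probability-preserving maps. First I would invoke Theorem~\ref{thm:cp-xi-coverage}, which already delivers $\mathbb P(e_1 \in \CPpredRegionPrime) \ge (1-\alpha)$ with no assumptions on the fidelity of $\approxDynamics$ or on the disturbance distribution. The only subtlety at this stage is the clipping step: because the scaled region $\CPpredRegionPrime$ may extend outside the bijective neighborhood $U^\vee$, I replace it by $\CPpredRegion := \CPpredRegionPrime \cap U^\vee$. Since $e_1 = \log(E_1)$ lands in $U^\vee$ by construction (as $\log_m$ is the inverse of $\exp_m\!\mid_U$), the events $\{e_1 \in \CPpredRegionPrime\}$ and $\{e_1 \in \CPpredRegion\}$ coincide, hence $\mathbb P(e_1 \in \CPpredRegion) = \mathbb P(e_1 \in \CPpredRegionPrime) \ge (1-\alpha)$. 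I expect this to be the step requiring the most care, since it is where the geometry of $\exp$ actually enters and where a careless argument could silently lose probability mass.

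Next I would propagate this statement forward through the two preimage layers. Theorem~\ref{thm:cp-X-coverage} gives $\mathbb P(\lGroupEle_1 \in \CPpredRegionExp) = \mathbb P(e_1 \in \CPpredRegion)$, using that $g_1 = \phi(e_1)$ and that membership in the preimage $\phi^{-1}(\CPpredRegionExp)$ is equivalent to $\phi(e_1) \in \CPpredRegionExp$; no invertibility of $\phi$ beyond what was already established is needed. Then Theorem~\ref{thm:cspace_safe} gives $\mathbb P(q_1 \in \CPpredRegionQ) = \mathbb P(\lGroupEle_1 \in \CPpredRegionExp)$ by the same preimage identity applied to the kinematics map $\FKmap$, via $q_1 \in \FKmap^{-1}(\CPpredRegionExp) \Leftrightarrow \FKmap(q_1) = g_1 \in \CPpredRegionExp$.

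Finally I would stitch the chain together: $\mathbb P(q_1 \in \CPpredRegionQ) = \mathbb P(\lGroupEle_1 \in \CPpredRegionExp) = \mathbb P(e_1 \in \CPpredRegion) = \mathbb P(e_1 \in \CPpredRegionPrime) \ge (1-\alpha)$, which is exactly the desired marginal coverage guarantee. The conclusion holds on average over the test-time transitions drawn from $\CPdist$, not pointwise in $q_1$, inherited verbatim from the SplitCP guarantee underlying Theorem~\ref{thm:cp-xi-coverage}. The one point to remain vigilant about is that all three links are equalities of probabilities of equivalent events rather than mere inequalities, so no slack accumulates and the $(1-\alpha)$ level is preserved end to end.
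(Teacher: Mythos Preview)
Your proposal is correct and follows exactly the approach the paper takes: the corollary is obtained by chaining Theorem~\ref{thm:cp-xi-coverage}, the clipping equality $\mathbb P(e_1\in\CPpredRegion)=\mathbb P(e_1\in\CPpredRegionPrime)$, Theorem~\ref{thm:cp-X-coverage}, and Theorem~\ref{thm:cspace_safe} in sequence. Your treatment of the clipping step and the preimage arguments matches the paper's reasoning precisely, and your observation that the intermediate links are equalities (so no slack accumulates) is a nice clarification that the paper leaves implicit.
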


\looseness-1
We have shown that by calibrating an approximate uncertainty estimate in exponential coordinates and mapping it to a prediction region in C-Space $\CPpredRegionQ$, our algorithm produces a set that marginally contains the unknown true configuration $q_1$ with at least the user-defined probability $(1-\alpha)$.

\vspace{-2mm}
\subsection{Example Downstream Applications}\label{sec:imp_tricks}\vspace{-1mm}

\looseness-1 After following Alg. \ref{alg:CLASPS} we have a calibrated set $\CPpredRegionPrime \subseteq \lAlgebra$, which might be used for safe control in a few ways: A) Checking if a given configuration $g$ is contained in $\CPpredRegionPrime$ is equivalent to verifying the inequality $r_1(g;\tilde g_1,\zeta \tilde \Sigma_1) \le \chi^2_\alpha(\dim \lAlgebra)$. This can be done efficiently in a batched manner for thousands of points.
B) Sometimes, it is helpful to reconstruct the C-Space set $\CPpredRegionQ$, for example to check if $\CPpredRegionQ \subseteq \SAFE$ for a known set $\SAFE \subseteq \mathcal Q$, which would enable probabilistically safe one-step control.
This may be computationally expensive, however the process can be simplified if $K$ has a diffeomorphic inverse $K^{inv}$, which occurs for \SEtwo~when the angle is restricted to a $2\pi$ interval.
Then its composition with $\phi$, $K^{inv}\circ \phi$, is also diffeomorphic, and can be used to map the exponential coordinates boundary $\partial \CPpredRegion$ to the C-Space boundary $\partial \CPpredRegionQ$.
\begin{lemma}[Theorem 2.18 of \cite{lee2010introduction}]\looseness-1
Let $\partial \CPpredRegion, \partial \CPpredRegionQ$ denote the boundaries of $\CPpredRegion \subseteq \mathbb R^d$, $\CPpredRegionQ:= K^{inv}\circ \phi (\CPpredRegion) \subseteq \mathcal Q$ respectively. If $K^{inv}\circ \phi$ is diffeomorphic, $\partial \CPpredRegionQ =  (K^{inv}\circ \phi)(\partial \CPpredRegion)$. \label{lem:boundary_pt}
\end{lemma}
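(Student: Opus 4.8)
The plan is to reduce the statement to the standard topological fact that a homeomorphism carries the topological boundary of a set onto the boundary of its image, and then observe that $K^{inv}\circ\phi$, being diffeomorphic, is in particular a homeomorphism onto its image. Write $\psi:=K^{inv}\circ\phi$. I would recall that the boundary of a subset $A$ of a topological space $X$ can be written as $\partial A=\overline A\setminus\mathrm{int}\,A$, with closure and interior taken in $X$; the relevant ambient spaces here are $\mathbb R^d$ (more precisely the open neighborhood $U^\vee$ on which $\phi$ is a diffeomorphism) for $\CPpredRegion$, and $\mathcal Q$ for $\CPpredRegionQ$.

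First I would establish the commutation identities $\psi(\overline A)=\overline{\psi(A)}$ and $\psi(\mathrm{int}\,A)=\mathrm{int}\bigl(\psi(A)\bigr)$ for every $A$ in the domain of $\psi$. The inclusion $\psi(\overline A)\subseteq\overline{\psi(A)}$ is just continuity of $\psi$; applying the same inclusion to the continuous map $\psi^{-1}$ and to the set $\psi(A)$ gives $\psi^{-1}\bigl(\overline{\psi(A)}\bigr)\subseteq\overline A$, i.e.\ $\overline{\psi(A)}\subseteq\psi(\overline A)$, hence equality. The interior identity is dual: a homeomorphism is an open map, so $\psi(\mathrm{int}\,A)$ is open and contained in $\psi(A)$, therefore contained in $\mathrm{int}(\psi(A))$; running the same argument for $\psi^{-1}$ gives the reverse inclusion.

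Next I would use that $\psi$ is a bijection onto its image, so it respects relative complements of nested sets: since $\mathrm{int}\,A\subseteq\overline A$, we get $\psi(\overline A\setminus\mathrm{int}\,A)=\psi(\overline A)\setminus\psi(\mathrm{int}\,A)$. Chaining this with the two identities above yields $\psi(\partial A)=\overline{\psi(A)}\setminus\mathrm{int}(\psi(A))=\partial\bigl(\psi(A)\bigr)$. Specializing to $A=\CPpredRegion$ and using $\psi(\CPpredRegion)=\CPpredRegionQ$ by definition gives $\partial\CPpredRegionQ=(K^{inv}\circ\phi)(\partial\CPpredRegion)$, as claimed.

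The point requiring care — more bookkeeping than genuine obstacle — is keeping track of the ambient spaces. $\phi$ (hence $\psi$) is only a diffeomorphism after restriction to the open set $U^\vee\subseteq\mathbb R^d$ where $\exp$ is injective, which is exactly the regime the lemma is used in, since $\CPpredRegion=\CPpredRegionPrime\cap U^\vee$ has already been clipped into $U^\vee$; as long as $\overline{\CPpredRegion}\subseteq U^\vee$, the boundary of $\CPpredRegion$ relative to $U^\vee$ coincides with its boundary in $\mathbb R^d$, so no generality is lost. Likewise $K^{inv}$ must be the diffeomorphic inverse of $K$ (available for $SE(2)$ under the stated $2\pi$ angular restriction), so that $\psi$ lands in a single chart of $\mathcal Q$ and the boundary on the C-Space side is the intended one.
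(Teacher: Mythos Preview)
Your argument is correct: reducing to the standard fact that homeomorphisms preserve closures, interiors, and hence boundaries is exactly the right approach, and your bookkeeping about the ambient open set $U^\vee$ and the $2\pi$ angular restriction is appropriately careful. The paper itself does not supply a proof of this lemma at all --- it simply cites it as Theorem~2.18 of Lee's text and moves on --- so your proposal is in fact \emph{more} than what the paper provides, and is essentially the standard proof one would find behind that citation.
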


\looseness-1 We describe $\CPpredRegionQ$ reconstruction in Alg \ref{alg:fit-mesh}, use it in $\S$\ref{sec:isaac_exp} for calculating C-Space volumes and Empirical Coverage, and provide an implementation tested to reconstruct $\CPpredRegionQ$ at up to 25 Hz.
C) Other times, safety requires performing intersection checks between a workspace $(\mathbb R^2)$ footprint and known obstacles. For \SEtwo, one can inflate obstacles by the robot's radius and subsequently treat the robot as a point. After sampling points in $\partial \CPpredRegionQ$ per B), map them to $\mathbb R^2$ by marginalizing the heading $\theta$. Then, either check the safety of these points directly, or reconstruct a 2D surface and check that instead. We provide code${}^\text{\ref{code_url}}$ to reconstruct said 2D surface, which we used for Fig. \ref{fig:isaac_2d_marg}.

\vspace{-5mm}
\begin{algorithm}[h!] \label{alg:fit-mesh}
\setlength{\algomargin}{0pt}

\caption{Reconstruct C-Space Mesh}
\SetInd{0.2em}{0.5em}
\SetAlgoLined
\DontPrintSemicolon
\footnotesize{\tcc{Sample points from $d$-dim unit sphere $\mathbb S^d$, map to Calibrated Lie algebra Set}}
$\partial \CPpredRegionPrime  \leftarrow$ $\sqrt{\chi^2_\alpha(d)} (\zeta \tilde \Sigma_1)^{1/2}   \texttt{Sample($\mathbb S^d$)}$ \\
$\partial \mathcal C  \leftarrow \partial  \CPpredRegionPrime \cap\ U^\vee$ \footnotesize{\tcp*[f]{Restrict to diffeomorphic}}\\
$\partial \CPpredRegionQ \leftarrow (K^{inv}\circ \phi)(\partial \CPpredRegion)$ \footnotesize{\tcp*[f]{Map points to C-Space}}\\

$\CPpredRegionQ \leftarrow \texttt{Reconstruct}(\partial \CPpredRegionQ)$ \hspace{-3.2mm}\footnotesize{\tcp*[f]{Get mesh from points}}
\end{algorithm}\vspace{-6mm}

\section{EXPERIMENTS \& DISCUSSION}
\label{sec:isaac_exp}
\vspace{-2mm}

\looseness-1 To support Theorems \ref{thm:cp-xi-coverage}-\ref{thm:cspace_safe} and Corollary \ref{col:all_map}, we conducted an empirical experiment on Isaac Sim using the JetBot (Fig. \ref{fig:clasps_model}), and a hardware experiment on the MBot platform (Fig. \ref{fig:title_figure}). These are both 
\textit{nonlinear}, \textit{underactuated}, and \textit{nonholonomic systems}, that we modeled as \textit{second-order unicycles} with configuration in \SEtwo. \reviewChange{See Appendix\ref{sec:APPconvert_to_EPS} for the dynamics equations used, and an application of $\S$\ref{sec:convert_to_EPS}.} The true system dynamics are unknown, the inertial properties are estimated with standard system identification, and both systems are subject to \textit{aleatoric distubances}\footnote{\looseness-1 \reviewChange{In Appendix\ref{sec:APP-systemIdentification}, we detail the inertial property estimation procedure for both systems, as well as potential sources of uncertainty.}} --- in simulation these were injected following the Problem formulation $\S$\ref{sec:prob_statement}, and in hardware these are inherent to real robotic control (e.g., uneven terrain, wheel roughness, etc.).
Besides the errors introduced by the system identification, \textit{epistemic uncertainty} is also present due to effects not modeled by $\tilde f$, such as friction, CoM deviation from the body-fixed origin, actuation delay, and when converting desired body-frame forces/torques to commanded wheel torques.

 \looseness-1 We compare \methodName~with \reviewChange{seven} baselines, to demonstrate the improved \textit{efficiency} of its prediction region $\CPpredRegionQ$, and \methodName'~ability to represent the underlying uncertainty. All methods are based on the prediction step of the Extended Kalman Filter (EKF), performed at $60$ Hz with Forward Euler discretization, and share the same inertia matrix estimate $\tilde \lInertiaMatrix$, and the same \textit{uncalibrated} initial uncertainty estimate $\tilde Q_0$. Hence the expected future pose $\tilde g_1$ (or $\tilde q_1$) is shared across approaches, which differ in how they represent and calibrate uncertainty. All results show prediction regions and Monte Carlo (MC) particles after one full planning step of $\dt=0.5$ sec i.e., 30 iterations. SS EKF performs the prediction step using State-Space dynamics, resulting in ellipsoidal $\CPpredRegionQ$. The Invariant EKF (InEKF) \cite{barrau2016invariant,barrau2018invariant} uses Lie group dynamics, propagating a Gaussian uncertainty on the Lie algebra, and leading to banana-shape regions \cite{long2013banana}. 
 \reviewChange{These baselines do not consider $D_{cal}$. InEKF+2M uses the uncentered second moment of the one-step configuration errors $e_1$ in $D_{cal}$ as its uncertainty estimate, i.e., $\tilde Q_1^{2M} \approx \mathbb E[e_1 e_1\transpose]$. InEKF+MLE fits both a bias correction and a centered covariance to the $e_1$ in $D_{cal}$, i.e. $\tilde b_1 \approx \mathbb E[e_1], \tilde Q_1^{MLE}\approx\mathbb E[(e_1-\tilde b_1)(e_1-\tilde b_1)\transpose]$. These methods perform a data-driven estimation of the uncertainty covariance, yet do \textit{not} adapt $\tilde Q_1$ to the commanded action. \textit{None} of the four methods above provide guarantees on $\CPpredRegionQ$ containing the future system configuration, being unsuitable for safety-critical control. 
}
  SS PP + CP is a common approach \cite{lindemann2023safe,dixit2023adaptive,sun2023conformal,strawn2023conformal} using the expectation of SS EKF's prediction as a point-prediction (PP) $\tilde q_1$, and performing conformal calibration by using the L2 distance between $\tilde q_1$ and the true resulting configuration $q_1$ as the score, i.e., $r_1=\lVert \tilde q_1 -q_1 \rVert$. Lie PP + CP is a naive extension of this method to Lie groups, using the Lie group dynamics, and with L2 distance now calculated in the Lie algebra, i.e. $r_1 = \lVert e_1 \rVert =\lVert \log (\tilde g_1^{-1}g_1)\lVert$. This results in a ball-shaped region that is then mapped from Lie algebra to C-Space following the same map as \methodName. SS EKF + CP \cite{marques2024quantifying} performs uncertainty-aware calibration in an Euclidean C-Space, using the Mahalanobis distance as the score $r_1 = \sqrt{(q_1-\tilde q_1)\transpose \tilde \Sigma_1 (q_1-\tilde q_1)}$. \reviewChange{Our proposed approach, can be interpreted as a \textit{provably-correct symmetry-aware calibration of InEKF}.}
 We use $\alpha=0.1$ in all experiments. 

\vspace{-3mm}
\subsection{Simulation Experiments (JetBot)}\label{sec:sim_exp_only}\vspace{-1mm}

\looseness-1 We independently sample $u_{noise} \sim Q_{cont} :=N(0,\textrm{diag}(0.005, 0.001))$ and additively perturb the desired wrenches to represent aleatoric external wrench disturbances, i.e., $u_{cmd} = u_{des} + u_{noise}$. While $u$ is a wrench, we write the corresponding accelerations for interpretability.
Let $\texttt{lin}(a,b,N)$ denote a linearly spaced sequence of $N$ reals from $a$ to $b$. 
The calibration dataset $D_{cal}$ was collected by spanning the grid:
$s_0=(0,0,0)$; $\dot x_0^b\in \texttt{lin}(0.1, 0.5, 3);\dot y_0^b=0;\dot \theta_0^b\in \texttt{lin}(0, 0.5, 3);\ddot x^b\in\texttt{lin}(0, 0.5,3);\ddot \theta^b \in \texttt{lin}(0, 2, 3)$.
To capture the aleatoric effects, we sampled $500$ transitions per gridpoint, totaling $\lvert D_{cal}\rvert =40,500$.
The validation set spanned the same grid, with now $5$ partitions per interval,
resulting in $5^4=625$ cases.
Thus, the validation and calibration data share the same distribution (Assumption \ref{ass:ex} holds).
For each validation case, $100k$ JetBots were propagated in Isaac, each with their own independently sampled $u_{noise}$.
These MC particles enable the calculation of \textit{empirical coverage}, i.e., the probability that a system under the true unknown stochastic dynamics produces configurations in the prediction region $\CPpredRegionQ$. By averaging empirical coverage over the $625$ trials, we can obtain an estimate of the \textit{marginal coverage} provided by each algorithm.
Additionally, we reconstructed $\CPpredRegionQ$ using Alg. \ref{alg:fit-mesh} with $5k$ samples, to check for containment of the MC particles, and to compute C-Space volumes.
One trial can be seen in Figure \ref{fig:val_isaac_case}, with $\partial \CPpredRegionQ$ for each method in orange. Following B) of $\S$\ref{sec:imp_tricks}, these C-Space geometries could be used for safe control. 
\begin{figure}[b!]
  \centering
\includegraphics[width=\linewidth]{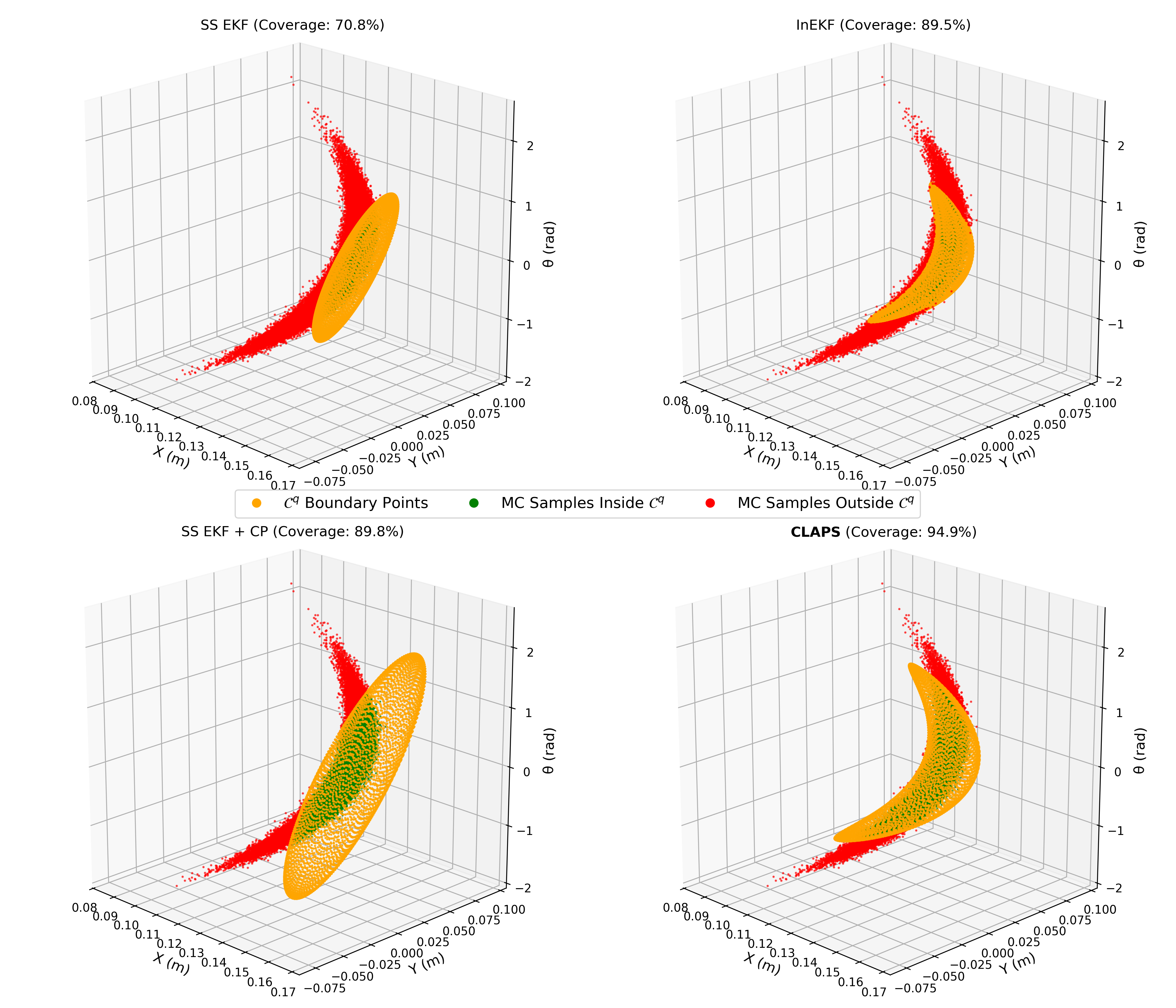}
  \vspace{-7mm}
  \caption{\looseness-1 C-Space prediction regions for one of the $625$ simulation trials, with 100k MC particles overlaid representing the system's stochasticity.
  Each method's empirical coverage for this trial is shown ($\alpha=0.1$).
  SS EKF and InEKF are not guaranteed to contain the future configurations at the user-set likelihood. The ball-shaped Point Prediction (PP) baselines produced regions too large to plot in the same scale.
  Since SS EKF + CP treats configurations as Euclidean vectors, its regions are restricted to hyper-ellipsoids that do not capture the underlying uncertainty as well as \methodName. }
   \label{fig:val_isaac_case}
   \vspace{-5mm}
\end{figure}

 To further demonstrate \methodName' intuitive bounds, Fig. \ref{fig:isaac_2d_marg} shows the workspace footprint of $\CPpredRegionQ$, which could be used for probabilistic obstacle avoidance. Both figures qualitatively demonstrate \methodName~ability to fit the underlying system uncertainty (represented by MC samples). This is supported quantitatively by \methodName' larger Workspace Intersection-over-Union (IoU) with the MC samples. We report Marginal Coverage, Relative C-Space Volume (averaged over trials), and Workspace IoU with MC Samples in Table 1.

\begin{figure}[t!]
  \centering
\includegraphics[width=\linewidth]{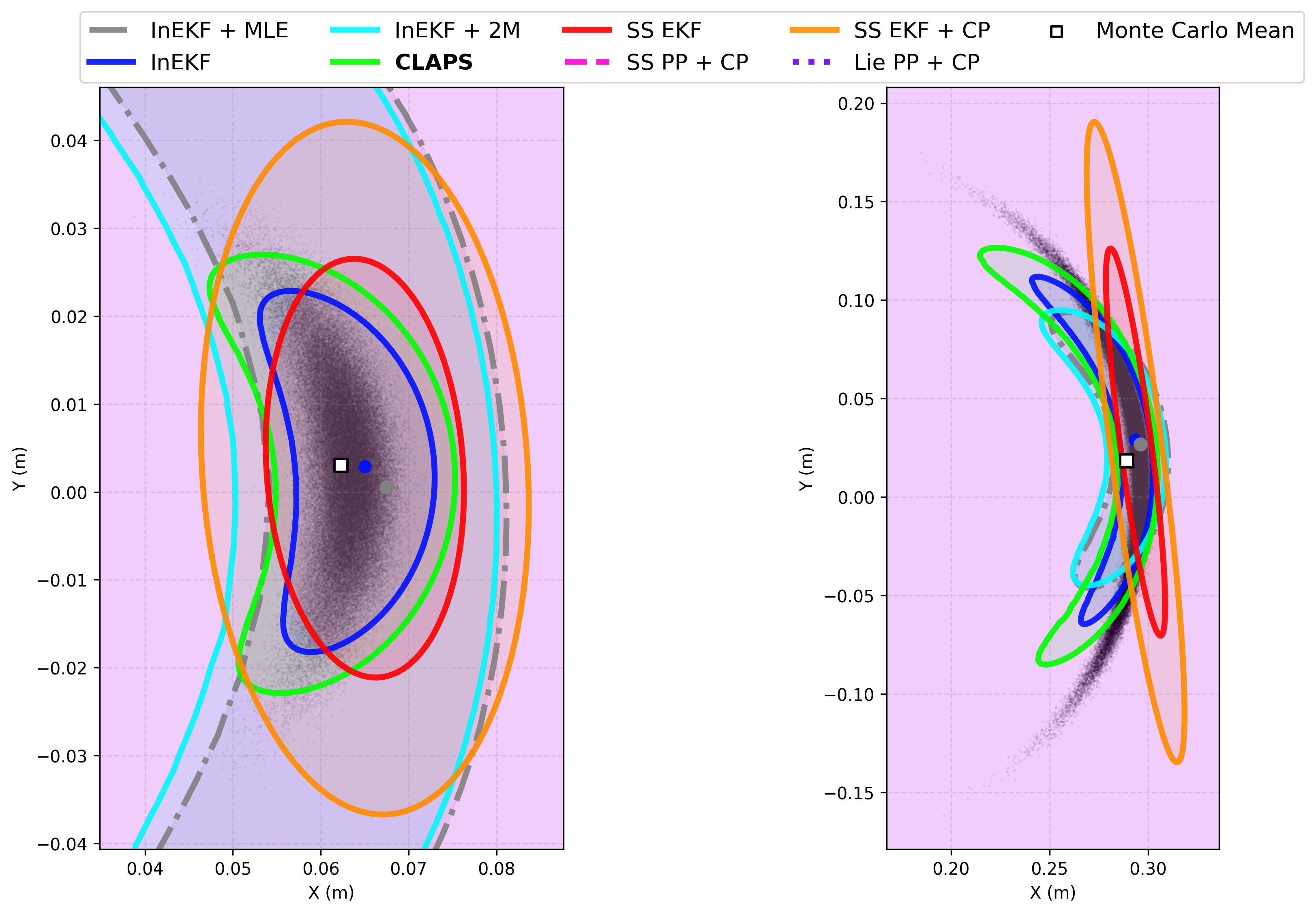}
  \vspace{-9mm}
  \caption{\looseness-1 Workspace ($\mathbb R^2$) Marginalization of the C-Space regions generated by all the methods, over two JetBot trials. 
  Left: $\dot x_0^b=0.1;\dot \theta^b_0=0; \ddot x^b_0=0.35;\ddot \theta^b_0=0.007$. Right: $\dot x_0^b=0.5;\dot \theta^b_0=0.375; \ddot x^b_0=1.05;\ddot \theta^b_0=0$.
  \reviewChange{InEKF+MLE has expected pose $\tilde g_1$ shown as the gray dot. All other methods have the same expected pose, which is represented by the blue dot}. The Point Prediction (PP) methods generate large regions with boundaries lying outside the plots' margins. SS EKF, InEKF, \reviewChange{InEKF+2M, and InEKF+MLE} are not guaranteed to contain the resulting configuration at the user-set likelihood. 
  Qualitatively, \methodName~appears to more accurately represent the underlying uncertainty distribution than the symmetry-unaware baselines. }
   \label{fig:isaac_2d_marg}
\end{figure}

\begin{table}[b!]
\vspace{1mm}
\captionsetup{              
  skip   = 4pt,             
}
\centering
\scriptsize
\setlength{\tabcolsep}{2pt}
{\setstretch{0.8}\caption{JetBot (Simulation) Results (over $625$ validation trials)}}
\vspace{-3mm}
\label{tab:isaac_results}

\begin{tabular}{l|c|c|c|c}
\toprule
\multirow{2}{*}{Algorithm}
  & Marginal
  & Avg. 
  & \multicolumn{1}{c}{Avg. Workspace IoU} & \multicolumn{1}{c}{Provable}\\
& Coverage $(\%)$& Volume Ratio $\downarrow$ & \multicolumn{1}{c}{with Particles $(\%)\uparrow$} & \multicolumn{1}{c}{Guarantees?}
\\

\midrule
SS EKF & \ccell{red!15}78.7  & \ccell{red!15}0.63 & \ccell{red!15}35.0 & \myxmark \\ 
\midrule
InEKF & \ccell{red!15}82.7 & \ccell{red!15}0.47 & \ccell{red!15}41.8 & \myxmark \\ 
\midrule
\reviewChange{InEKF+2M} & \ccell{red!15}89.2 & \ccell{red!15}3.06 & \ccell{red!15}40.0 & \myxmark\\ 
\midrule
\reviewChange{InEKF+MLE} & 90.3 & 2.80 & 42.3 & \myxmark\\ 
\midrule
SS PP + CP & 89.9  & 2137 & 0.20 & \mycheck \\ 
\midrule
Lie PP + CP & 89.9  & 2138 & 0.20 & \mycheck \\ 
\midrule
SS EKF + CP & 91.2  & 2.86 & 30.4 & \mycheck \\ 
\midrule
\methodName & 90.0  & \textbf{1.00} & \textbf{48.4} & \mycheck  \\ 
\midrule
\end{tabular}
\vspace{1pt}
\begin{minipage}{\linewidth}
\raggedright\scriptsize
\colorbox{red!15}{red} if coverage does not achieve $(<)$ the user-set probability $(1-\alpha)=0.9$.\\
The average volume ratio is reported relative to \methodName.
\end{minipage}
\end{table}

\looseness-1 The \textit{uncalibrated} SS EKF and InEKF fail to satisfy the user-set specification. \reviewChange{InEKF+2M and InEKF+MLE estimate the same uncertainty for all initial velocities and $u_{des}$, thus becoming volume inefficient}. The CP methods achieve at least $(1-\alpha)\%$ coverage\footnote{Up to numerical error based on the limited number of validation trials.}, supporting Corollary \ref{col:all_map}. Since $\CPpredRegionQ$'s volume increases non-monotonically with $\alpha$ for each method, i.e., $\mathcal C_{\alpha_1} \subseteq \mathcal C_{\alpha_2} \Rightarrow \alpha_1 \ge \alpha_2$, higher marginal coverage is not necessarily beneficial, and all methods not in red should be treated as equally \textit{calibrated}. Both algorithms using L2-based scores (SS PP + CP, Lie PP + CP) construct significantly large ball-shaped $\CPpredRegionQ$, being impractical.
 Additionally, symmetry-unaware methods (e.g., SS EKF + CP) can create $\CPpredRegionQ$ covering volumes of zero support (e.g., contain volumes where $\lvert \theta\rvert  > \pi)$.
Our method produces efficient banana shaped regions containing a satisfactory probability mass of the future configurations --
\methodName' $\CPpredRegionQ$ has smaller C-Space volumes than all calibrated baselines in the 625 validation trials we tested. Further, \methodName~achieves the highest average IoU with the MC Particles, validating \methodName' better representation of the underlying uncertainty. Compared with other CP methods, \methodName~achieves a higher IoU in each tested trial. 
The trials also supported Theorems 1-3, with
the MC particles satisfying $e_1 \in \CPpredRegion \Rightarrow g_1 \in \CPpredRegionQ$.

\vspace{-2.5mm}
\subsection{Hardware Experiments (MBot)}\vspace{-1mm}\label{sec:real_experiment}
The robot's pose and velocity were estimated using Motion Capture (Fig. \ref{fig:title_figure}). Calibration and validation data were collected by randomly sampling $u_{des}$ from $\ddot x^b \in (0, 0.5); \ddot \theta^b \in (0, 2)$ and holding it for $\dt$. The MBot's velocity was kept to approximately within $\dot x^b \in (0.1, 0.3); \lvert \dot \theta^b\rvert \in (0, 0.5)$. We shuffled the data, \reviewChange{allocating $5\%$ for calibration ($\rvert D_{cal}\lvert=237$), which corresponds to $\approx2$ min of driving, and leaving $4511$ transitions for validation}. Since there is a single \say{MC Particle} per transition, we cannot compute IoU as in simulation. Table 2 shows the coverage and C-Space volumes.
\begin{table}[t]
\captionsetup{skip=4pt}
\centering
\scriptsize
\setlength{\tabcolsep}{2pt} 
{\setstretch{0.8}\caption{MBot (Hardware) Coverage, Volume ($4511$ validation trials)}}
\vspace{-3mm}
\label{tab:real_mbot_results}
\begin{tabular}{l|c|c|c|c|c|c|c|c}
\toprule
Algorithm & \makecell{SS\\EKF} & \makecell{InEKF} & \reviewChange{\makecell{InEKF\\ +2M}} & \reviewChange{\makecell{InEKF\\ +MLE}} & \makecell{SS PP\\+ CP} & \makecell{Lie PP\\+ CP} & \makecell{SS EKF\\+ CP} & \methodName \\
\hline
\makecell{Marginal\\ Coverage (\%)} 
  & \ccell{red!15}{73.5}
  & \cellcolor{red!15}70.6
  & \cellcolor{red!15}87.4
  & \cellcolor{red!15}86.9
  & 90.5
  & 90.5
  & 91.8
  & 90.4 \\
  \midrule
   \makecell{Avg. Volume \\ Ratio $\downarrow$} & \ccell{red!15}0.32 &\ccell{red!15}0.27  &\ccell{red!15}0.08 &\ccell{red!15}0.05 & 2.09 &2.09 & 1.30 & \textbf{1.0}\\
   \midrule
   \makecell{Provable \\ Guarantees?} & \myxmark & \myxmark& \myxmark & \myxmark & \mycheck & \mycheck &\mycheck & \mycheck\\
\bottomrule
\end{tabular}

\vspace{1pt}
\begin{minipage}{\linewidth}
\raggedright\scriptsize
\colorbox{red!15}{red} if coverage does not achieve $(<)$ the user-set probability $(1-\alpha)=0.9$.\\
The average volume ratio is reported relative to \methodName.
\end{minipage}
\end{table}

\looseness-1 \reviewChange{In this low-data experiment, the approximate estimators failed to achieve the user-set requirement, while the CP methods satisfied it -- as expected from the \textit{finite-sample guarantees}.} We observed less angular uncertainty with the MBot than in $\$$\ref{sec:sim_exp_only}, possibly due to larger ground friction and slower speeds.
This partially explains the volume ratio reduction, as the MC particles may be reasonably captured by convex regions in low-uncertainty regimes, while symmetry-awareness becomes more important as angular uncertainty grows \cite{long2013banana}. Still,
\methodName~produced a smaller average $\CPpredRegionQ$ than all calibrated baselines,
demonstrating its volume efficiency in real situations. Compared to SS EKF + CP, \methodName' regions were on average 23\% smaller and up to 75\% smaller.
The Offline step of \methodName~(lines 1-5 in Alg \ref{alg:CLASPS}) took $0.14$ sec on an Intel i9-12900K, and the Online portion (lines 6-7) $0.02$ sec per $u_{des}$. The $\CPpredRegionQ$ reconstruction (Alg. \ref{alg:fit-mesh}) took $0.35\pm0.02$ sec with $5k$ boundary particles, and $0.04$ sec with $500$ particles\footnote{\reviewChange{In Appendix\ref{sec:APP-sensitivityNrParticles}, we discuss the sensitivity of our mesh reconstruction to the number of sampled boundary points.}} -- acceptable given the MBot's sampling rate of 25 Hz. Per $\S$\ref{sec:imp_tricks}, Alg. \ref{alg:fit-mesh} is not required for downstream use for safe control, as collision checks can be performed in the workspace or on individual sampled points.

\vspace{-1mm}
\color{black}
\section{CONCLUSION}\vspace{-2mm}

\looseness-1 We proposed an algorithm that enables constructing \textit{calibrated prediction regions} when under both \textit{aleatoric} and \textit{epistemic uncertainty}.
Our method leverages the robot's symmetry to construct regions that appear to be more volume-efficient and a better representation of the underlying uncertainty than existing approaches, both in simulation and hardware, extending previous CP guarantees from Euclidean Space to robots with configurations in \SEtwo.

\flushend
\bibliographystyle{IEEEtran}
\bibliography{references}

\begin{thebibliography}{1}
\providecommand{\url}[1]{#1}
\csname url@samestyle\endcsname
\providecommand{\newblock}{\relax}
\providecommand{\bibinfo}[2]{#2}
\providecommand{\BIBentrySTDinterwordspacing}{\spaceskip=0pt\relax}
\providecommand{\BIBentryALTinterwordstretchfactor}{4}
\providecommand{\BIBentryALTinterwordspacing}{\spaceskip=\fontdimen2\font plus
\BIBentryALTinterwordstretchfactor\fontdimen3\font minus \fontdimen4\font\relax}
\providecommand{\BIBforeignlanguage}[2]{{%
\expandafter\ifx\csname l@#1\endcsname\relax
\typeout{** WARNING: IEEEtran.bst: No hyphenation pattern has been}%
\typeout{** loaded for the language `#1'. Using the pattern for}%
\typeout{** the default language instead.}%
\else
\language=\csname l@#1\endcsname
\fi
#2}}
\providecommand{\BIBdecl}{\relax}
\BIBdecl

\bibitem{munthe1998runge}
H.~Munthe-Kaas, ``Runge-kutta methods on lie groups,'' \emph{BIT Numerical Mathematics}, 1998.

\bibitem{blanes2006fourth}
S.~Blanes and P.~C. Moan, ``Fourth-and sixth-order commutator-free magnus integrators for linear and non-linear dynamical systems,'' \emph{Applied Numerical Mathematics}, 2006.

\bibitem{missura2011efficient}
M.~Missura and S.~Behnke, ``Efficient kinodynamic trajectory generation for wheeled robots,'' in \emph{IEEE ICRA}, 2011.

\end{thebibliography}


\begin{thebibliography}{10}
\providecommand{\url}[1]{#1}
\csname url@samestyle\endcsname
\providecommand{\newblock}{\relax}
\providecommand{\bibinfo}[2]{#2}
\providecommand{\BIBentrySTDinterwordspacing}{\spaceskip=0pt\relax}
\providecommand{\BIBentryALTinterwordstretchfactor}{4}
\providecommand{\BIBentryALTinterwordspacing}{\spaceskip=\fontdimen2\font plus
\BIBentryALTinterwordstretchfactor\fontdimen3\font minus \fontdimen4\font\relax}
\providecommand{\BIBforeignlanguage}[2]{{%
\expandafter\ifx\csname l@#1\endcsname\relax
\typeout{** WARNING: IEEEtran.bst: No hyphenation pattern has been}%
\typeout{** loaded for the language `#1'. Using the pattern for}%
\typeout{** the default language instead.}%
\else
\language=\csname l@#1\endcsname
\fi
#2}}
\providecommand{\BIBdecl}{\relax}
\BIBdecl

\bibitem{shi2019neural}
G.~Shi, X.~Shi, M.~O’Connell, R.~Yu, K.~Azizzadenesheli, A.~Anandkumar, Y.~Yue, and S.-J. Chung, ``Neural lander: Stable drone landing control using learned dynamics,'' in \emph{ICRA}, 2019.

\bibitem{salzmann2023real}
T.~Salzmann, E.~Kaufmann, J.~Arrizabalaga, M.~Pavone, D.~Scaramuzza, and M.~Ryll, ``Real-time neural mpc: Deep learning model predictive control for quadrotors and agile robotic platforms,'' \emph{RA-L}, 2023.

\bibitem{jeantube}
M.~P. Sumeet~Singh and J.-J. Slotine, ``Tube-based mpc: a contraction theory approach,'' in \emph{IEEE CDC}, 2016.

\bibitem{agrawal2022safe}
D.~R. Agrawal and D.~Panagou, ``Safe and robust observer-controller synthesis using control barrier functions,'' \emph{Control Sys. Letters}, 2022.

\bibitem{lindemann2023safe}
L.~Lindemann, M.~Cleaveland, G.~Shim, and G.~Pappas, ``Safe planning in dynamic environments using conformal prediction,'' \emph{RA-L}, 2023.

\bibitem{dixit2023adaptive}
A.~Dixit, L.~Lindemann, S.~X. Wei, M.~Cleaveland, G.~J. Pappas, and J.~W. Burdick, ``Adaptive conformal prediction for motion planning among dynamic agents,'' in \emph{L4DC}, 2023.

\bibitem{barrau2016invariant}
A.~Barrau and S.~Bonnabel, ``The invariant extended kalman filter as a stable observer,'' \emph{IEEE TACON}, 2016.

\bibitem{barfoot2024state}
T.~Barfoot, \emph{{State estimation for robotics}}.\hskip 1em plus 0.5em minus 0.4em\relax Cambridge Uni Press, 2024.

\bibitem{bloch2009quasivelocities}
A.~M. Bloch, J.~E. Marsden, and D.~V. Zenkov, ``Quasivelocities and symmetries in non-holonomic systems,'' \emph{Dynamical systems}, 2009.

\bibitem{bloch2015nonholonomic}
A.~M. Bloch, \emph{Nonholonomic mechanics}.\hskip 1em plus 0.5em minus 0.4em\relax Springer, 2015.

\bibitem{bansal2017hamilton}
S.~Bansal, M.~Chen, S.~Herbert, and C.~J. Tomlin, ``Hamilton-jacobi reachability: A brief overview and recent advances,'' in \emph{CDC}, 2017.

\bibitem{oestreich2023tube}
C.~E. Oestreich, R.~Linares, and R.~Gondhalekar, ``Tube-based model predictive control with uncertainty identification for autonomous spacecraft maneuvers,'' \emph{Journ. of Guidance, Control, and Dyn.}, 2023.

\bibitem{ames2019control}
A.~D. Ames, S.~Coogan, M.~Egerstedt, G.~Notomista, K.~Sreenath, and P.~Tabuada, ``Control barrier functions: Theory and applications,'' in \emph{ECC}, 2019.

\bibitem{brunke2022safe}
L.~Brunke, M.~Greeff, A.~W. Hall, Z.~Yuan, S.~Zhou, J.~Panerati, and A.~P. Schoellig, ``Safe learning in robotics: From learning-based control to safe reinforcement learning,'' \emph{Ann. Rev. CRAS}, 2022.

\bibitem{wang2024providing}
H.~Wang, J.~Borquez, and S.~Bansal, ``Providing safety assurances for systems with unknown dynamics,'' \emph{Control Sys. Letters}, 2024.

\bibitem{nagami2024state}
K.~Nagami and M.~Schwager, ``State estimation and belief space planning under epistemic uncertainty for learning-based perception systems,'' \emph{IEEE RA-L}, 2024.

\bibitem{khan2021safety}
M.~Khan, T.~Ibuki, and A.~Chatterjee, ``Safety uncertainty in control barrier functions using gaussian processes,'' in \emph{ICRA}, 2021.

\bibitem{zhang2025conformal}
J.~Zhang, B.~Hoxha, G.~Fainekos, and D.~Panagou, ``Conformal prediction in the loop: Risk-aware control barrier functions for stochastic systems with data-driven state estimators,'' \emph{L-CSS}, 2025.

\bibitem{sun2023conformal}
J.~Sun, Y.~Jiang, J.~Qiu, P.~Nobel, M.~J. Kochenderfer, and M.~Schwager, ``Conformal prediction for uncertainty-aware planning with diffusion dynamics model,'' \emph{NeurIPS}, vol.~36, pp. 80\,324--80\,337, 2023.

\bibitem{zhou2024safety}
H.~Zhou, Y.~Zhang, and W.~Luo, ``Safety-critical control with uncertainty quantification using adaptive conformal prediction,'' in \emph{ACC}, 2024.

\bibitem{strawn2023conformal}
K.~J. Strawn, N.~Ayanian, and L.~Lindemann, ``Conformal predictive safety filter for rl controllers in dynamic environments,'' \emph{RA-L}, 2023.

\bibitem{luo2024sample}
R.~Luo, S.~Zhao, J.~Kuck, B.~Ivanovic, S.~Savarese, E.~Schmerling, and M.~Pavone, ``Sample-efficient safety assurances using conformal prediction,'' \emph{IJRR}, 2024.

\bibitem{yang2023safe}
S.~Yang, G.~J. Pappas, R.~Mangharam, and L.~Lindemann, ``Safe perception-based control under stochastic sensor uncertainty using conformal prediction,'' in \emph{CDC}, 2023.

\bibitem{marques2024quantifying}
L.~Marques and D.~Berenson, ``Quantifying aleatoric and epistemic dynamics uncertainty via local conformal calibration,'' \emph{WAFR}, 2024.

\bibitem{brossard2017unscented}
M.~Brossard, S.~Bonnabel, and J.-P. Condomines, ``Unscented kalman filtering on lie groups,'' in \emph{IROS}, 2017.

\bibitem{schuck2024reinforcement}
M.~Schuck, J.~Br{\"u}digam, S.~Hirche, and A.~Schoellig, ``Reinforcement learning with lie group orientations for robotics,'' \emph{ICRA}, 2025.

\bibitem{long2013banana}
A.~Long, K.~Wolfe, M.~Mashner, and G.~Chirikjian, ``The banana distribution is gaussian: A localization study with exponential coordinates,'' \emph{Robotics: Science and Systems VIII}, vol. 265, no.~1, 2013.

\bibitem{barrau2018invariant}
A.~Barrau and S.~Bonnabel, ``Invariant kalman filtering,'' \emph{Ann. Rev. CRAS}, 2018.

\bibitem{jang2023convex}
J.~Jang, S.~Teng, and M.~Ghaffari, ``Convex geometric trajectory tracking using lie algebraic mpc for autonomous marine vehicles,'' \emph{IEEE Robotics and Automation Letters}, 2023.

\bibitem{tayefi2019logarithmic}
M.~Tayefi and Z.~Geng, ``Logarithmic control, trajectory tracking, and formation for nonholonomic vehicles on lie group se (2),'' \emph{IJC}, 2019.

\bibitem{yu2023fully}
X.~Yu, S.~Teng, T.~Chakhachiro, W.~Tong, T.~Li, T.-Y. Lin, S.~Koehler, M.~Ahumada, J.~M. Walls, and M.~Ghaffari, ``Fully proprioceptive slip-velocity-aware state estimation for mobile robots via invariant kalman filtering and disturbance observer,'' in \emph{IEEE/RSJ IROS}, 2023.

\bibitem{marsden2013introduction}
J.~Marsden and T.~Ratiu, \emph{Introduction to mechanics and symmetry}.\hskip 1em plus 0.5em minus 0.4em\relax Springer, 1999.

\bibitem{lee2010introduction}
J.~Lee, \emph{Introduction to topological manifolds}.\hskip 1em plus 0.5em minus 0.4em\relax Springer, 2010.

\bibitem{bullo2019geometric}
F.~Bullo and A.~Lewis, \emph{Geometric control of mechanical systems}.\hskip 1em plus 0.5em minus 0.4em\relax Springer, 2019.

\bibitem{angelopoulos2024theoretical}
A.~N. Angelopoulos, R.~F. Barber, and S.~Bates, ``Theoretical foundations of conformal prediction,'' \emph{arXiv:2411.11824}, 2024.

\bibitem{hall2013lie}
B.~Hall, \emph{{Lie}\,Groups,\,{Lie}\,Algebras,\,and\,{Representations}}.\hskip 1em plus 0.5em minus 0.4em\relax Springer, 2013.

\bibitem{falorsi2019reparameterizing}
L.~Falorsi, P.~de~Haan, T.~R. Davidson, and P.~Forr{\'e}, ``Reparameterizing distributions on lie groups,'' in \emph{AISTATS}.\hskip 1em plus 0.5em minus 0.4em\relax PMLR, 2019.

\end{thebibliography}

\onecolumn
\nobalance
\clearpage
\appendices
\appendices
\section*{APPENDIX}

\looseness-1 This extended section provides a clear and self-contained presentation of the contributions of \cite{bloch2009quasivelocities,bloch2015nonholonomic}. We also include more details about the experiments and the open-sourced code, a numerical validation of converting dynamics from SS to Lie group form, and an equation to avoid computing EPS Lagrange multipliers at every integration step.

\subsection{Converting State Space Dynamics to Lie group Form}\label{sec:APPconvert_to_EPS}
Following the theoretical contributions of
of the momentum map and the concept of quasivelocities
\cite{bloch2009quasivelocities,bloch2015nonholonomic},
we provide below a self-contained presentation of the process for converting SS nonholonomic dynamics to Lie form. Fig. \ref{fig:APPapp-eq_relation} illustrates the relationships between these two forms.

Let the \textit{kinematics map} $\FKmap:\mathcal Q \to \lGroup$ map generalized coordinates $q$ to elements $g \in \lGroup$.
For $SE(i)$, one can write
\begin{equation}\label{eq:APPFK_map}
    g:=\FKmap(q)= \begin{bmatrix}
        R(q) & t(q)\\ 0 & 1
    \end{bmatrix} \in  \lGroup ,
\end{equation}
\looseness-1
where $t(q) \in \mathbb R^i$ is a translation vector, $R(q) \in \textrm{SO}(i)$ a rotation matrix, and $K\in \mathbb R^{(i+1)\times (i+1)}$.
To relate generalized velocities $\dot q$ to body-frame twists $\xi^\wedge$, we start by re-arranging the \textit{reconstruction eq.} \eqref{eq:reconstruction} into
\begin{equation}\label{eq:APPderive_jacob_step}
    \xi^\wedge =g^{-1}\dot g =g^{-1}\sum_{j=1}^n
\frac{\partial K(q)}{\partial q^j} \dot q^j =\sum_{j=1}^ng^{-1}
\frac{\partial K(q)}{\partial q^j} \dot q^j ,
\end{equation}
where $q^j$ denotes the $j$-th dimension of $q$, and the summation resulted from applying the chain rule to $\frac{d}{dt}g=\frac{d}{dt}K(q)$.
The inverse $g^{-1}$ exists by definition for any Lie group and in our case takes the form $g^{-1}=\left[\begin{smallmatrix}
R\transpose & -R\transpose t \\ 0 & 1
\end{smallmatrix}\right]$.
Each $\frac{\partial K(q)}{\partial q^j}$ is a matrix-valued element of the tangent space $T_{g}\lGroup$ since it describes an infinitesimal change in $g$ due to $q^j$. Left multiplying by $g^{-1}$ transports this tangent vector from $T_{g}\lGroup$ to $T_{\mathfrak e}\lGroup$, so that $g^{-1}({\partial K(q)}/{\partial q^j})\in \lAlgebra$.
Since $\dot q^j$ is a scalar, we can apply the vee map (linear on $\lAlgebra$) to both sides of \eqref{eq:APPderive_jacob_step} to get
\begin{equation*}
    (\xi^\wedge)^\vee =\xi =\left(\sum_{j=1}^ng^{-1}
\frac{\partial K(q)}{\partial q^j} \dot q^j\right)^\vee = \sum_{j=1}^n \left(g^{-1}
\frac{\partial K(q)}{\partial q^j} \right)^\vee \dot q^j .
\end{equation*}
We then collect the terms $\left(g^{-1}\frac{\partial K(q)}{\partial q^j}\right)^\vee \in \mathbb R^d$ column-wise into
\begin{equation}
    \lTwistMap(q) :=\left[\left(g^{-1}\frac{\partial K}{\partial q^1} \right)^\vee, \;\dots,\; \left(g^{-1}\frac{\partial K}{\partial q^n} \right)^\vee \right] \in \mathbb R^{d\times n}, 
\end{equation} the \textit{body-Jacobian}. This enables the velocity relationships 
\begin{equation}
    \label{eqAPP:twist_maps_dq_xi}
    \xi  = \lTwistMap(q) \dot q, \quad \text{and}\quad \dot q = \lTwistMap(q)^\dagger \xi ,
\end{equation} 
where $(\cdot )^\dagger$ denotes the Moore-Penrose pseudoinverse.
We assume the user has chosen $q$ s.t. $K$ is $C^1$ and the body-Jacobian has full rank for all $q\in \mathcal Q$.
A full-rank ensures the mapping $\dot q \mapsto \xi$ is surjective and that the mapping $\xi \mapsto \dot q$ produces an exact (but generally not unique) solution. For \SEthree, quaternions (or appropriate alternatives) should be used instead of Euler angles, which lose rank at gimbal-lock configurations.
We now show how to convert velocity constraints, inertial properties, and input maps.
Given Pfaffian velocity constraints $A(q)\dot q=0$, we substitute \eqref{eqAPP:twist_maps_dq_xi} to get $A(q)\dot q=A(q) \lTwistMap(q)^\dagger \xi$. Then to reach the EPS-compatible form $\lConstMatrix \xi = 0$, we define body-fixed constraint matrix $\lConstMatrix$ as
\begin{equation}
    \lConstMatrix(q) :=\constMatrix(q) \lTwistMap(q)^\dagger. 
\end{equation}
To obtain the body-frame inertia matrix from the generalized inertia matrix $M(q)$, we note that kinetic energy must be independent of the representation.
Hence, $T(q,\dot q)=\frac{1}{2}\dot q\transpose M(q)\dot q = \frac{1}{2}\xi \transpose\lInertiaMatrix \xi = l(\xi)$, and substituting \eqref{eqAPP:twist_maps_dq_xi} yields $\lInertiaMatrix(q) := (\lTwistMap(q)^\dagger)\transpose M(q) (\lTwistMap(q)^\dagger)$.
To convert external generalized forces to body-frame forces, we preserve mechanical power which gives $(\inputMatrix(q)u)\transpose \dot q= (\lInputMatrix(q) u)\transpose \xi$. Substituting again $\dot q=\lTwistMap(q)^\dagger \xi$ into the left-side gives $u\transpose B(q)\transpose \lTwistMap(q)^\dagger \xi = u\transpose \lInputMatrix(q)\transpose \xi \Rightarrow B(q)\transpose \lTwistMap(q)^\dagger = \lInputMatrix(q)\transpose$, which ultimately leads to the relationship 
\begin{equation}
    \lInputMatrix(q) := (\lTwistMap(q)^\dagger)\transpose \inputMatrix(q). 
\end{equation}
A similar procedure could be used to convert other forces between generalized and body-fixed frames.
With these mappings, one can convert most symmetric nonholonomic systems from State Space form $(q,\dot q)$ to Lie group form $(g,\xi)$.

\begin{figure}[t]
  \centering
  \includegraphics[width=.7\linewidth]{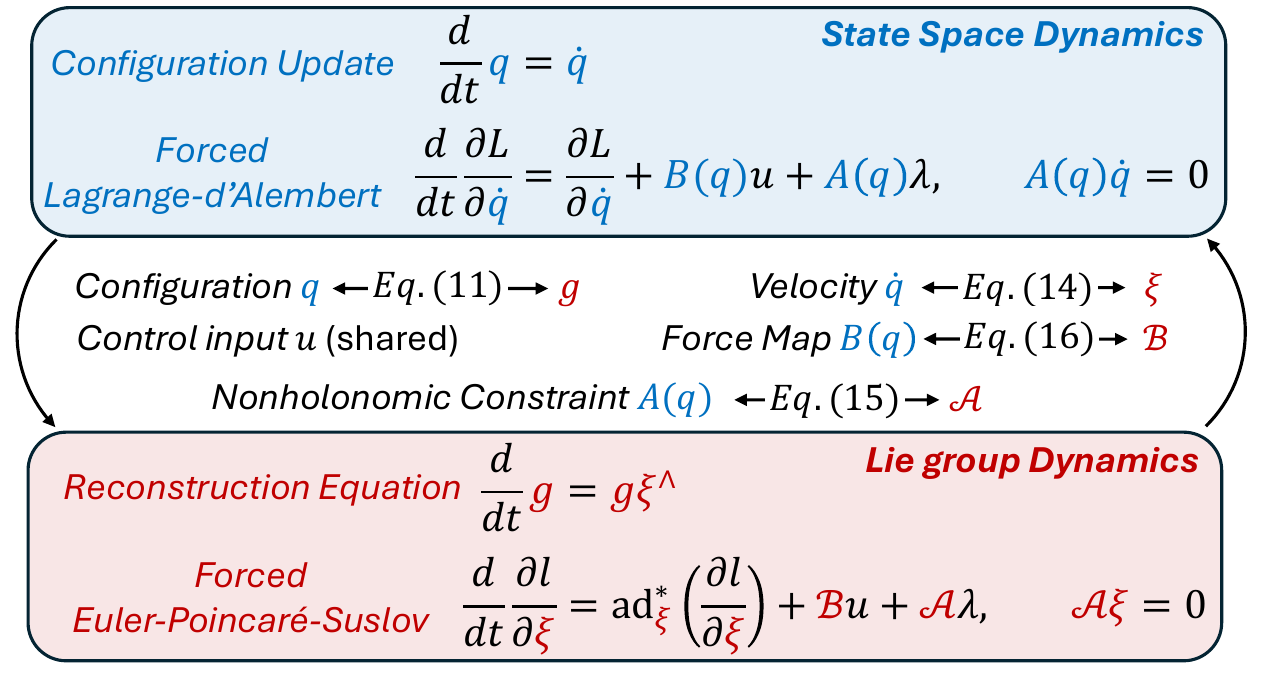}
  \caption{Converting nonholonomic systems between State Space and Lie group form. The Configuration Update and Reconstruction Eq. show how velocities impact configurations. The Lagrange-d'Alembert and Euler-Poincar\`e-Suslov Eqs. show how accelerations impact velocities. The Reconstruction Eq. and the $\textrm{ad}^*_\xi$ term introduce cross-dimensional couplings to the Lie group form, allowing rotational uncertainty to impact positional uncertainty.}
   \label{fig:APPapp-eq_relation}
\end{figure}

Let us demonstrate this general conversion process by applying it to the dynamical system used in the main paper: the \textit{second-order unicycle}.
\looseness-1 The configuration of this nonlinear underactuated nonholonomic system in generalized coordinates can be given by $q=[x^q,y^q, \theta^q]$, with C-Space $\mathcal Q = \mathbb R^2 \times [-\pi, \pi)$.
The constraint matrix is $A(q)=[s_{\theta^q}, -c_{\theta^q}, 0]$, where $s$ is $\sin$ and $c$ is $\cos$.
The control inputs are body-fixed wrenches $u =[f_x^b, \tau^b_z]\transpose \in \mathbb R^2$, with force map $\inputMatrix(q)
=\begin{psmallmatrix}
    c_{\theta^q} & s_{\theta^q} & 0 \\ 0 &  0 & 1
\end{psmallmatrix}\transpose$$.$
In \textit{body-fixed frame}, with origin at the CoM, the configuration becomes $g\in SE(2)$ and the twists $\xi=[\dot x^b, \dot y^b,\dot \theta^b]$. The \textit{ad operator} is $\textrm{ad}_\xi = \begin{psmallmatrix}
    0 & -\dot \theta^b & \dot y^b \\ \dot \theta^b & 0 & -\dot x^b \\ 0 & 0 &0
\end{psmallmatrix}$.
We can then obtain
$\lTwistMap(q) 
= \begin{psmallmatrix}    c_{\theta^q} & s_{\theta^q} & 0 \\ -s_{\theta^q} & c_{\theta^q} & 0 \\ 0 & 0 & 1
\end{psmallmatrix}=
\begin{psmallmatrix}  R^{-1}(q) & 0 \\  0 & 1
\end{psmallmatrix}$.
The constraint matrix is $\lConstMatrix=[0, 1, 0]\transpose$, enforcing no side-slip velocity, and the force map $\lInputMatrix 
=\begin{psmallmatrix}
    1 & 0 &0 \\ 0 &0 &1
\end{psmallmatrix}\transpose$$.$

\subsection{Nonholonomic Constraint Pre-Computation for Euler-Poincar\'e-Suslov Dynamics Propagation} \label{sec:APP-trickSpeedUpComp}

\looseness-1 In both the Lagrange-d'Alembert and the EPS equations \eqref{eq:eps_motion}, nonholonomic constraints are often enforced via Lagrange multipliers $\lambda$. This generally implies solving an extra equation at every integration step to determine $\lambda$, or integrating an augmented system. However, for systems where the nonholonomic constraints can be defined in terms of the body-frame twists (e.g., the second-order unicycle), $\lConstMatrix$ is configuration-invariant (constant). This allows us to instead integrate the unconstrained (holonomic) system and project the resulting twists onto the nonholonomic constraint manifold using a pre-computed matrix $\mathcal P$, potentially speeding up computation. \newline \indent Starting from the Euler-Poincar\'e-Suslov equations
\begin{equation*}
\lInertiaMatrix \dot \xi = \textrm{ad}^*_\xi(\lInertiaMatrix \xi) + \lInputMatrix u + \lConstMatrix^\top \lambda,\quad \lConstMatrix\xi =0,
\end{equation*}
note that a constant $\lConstMatrix$ gives $\frac{d}{dt}(\lConstMatrix \xi)= 0 \Rightarrow \lConstMatrix \dot \xi = 0$. Then, re-arranging for $\lambda$, we obtain
\begin{align*}
 &\lInertiaMatrix \dot \xi = \textrm{ad}^*_\xi(\lInertiaMatrix\xi) + \lInputMatrix u + \lConstMatrix^\top \lambda\\
& \Leftrightarrow  \dot \xi = \lInertiaMatrix^{-1}(\textrm{ad}^*_\xi(\lInertiaMatrix\xi) +\lInputMatrix  u + \lConstMatrix^\top \lambda)\\
&\Leftrightarrow \lConstMatrix\dot \xi = \lConstMatrix\lInertiaMatrix^{-1} (\textrm{ad}^*_\xi(\lInertiaMatrix\xi) + \lInputMatrix u + \lConstMatrix^\top \lambda) = 0\\
&\Leftrightarrow \lConstMatrix\lInertiaMatrix^{-1} (\textrm{ad}^*_\xi(\lInertiaMatrix\xi) + \lInputMatrix u) + \lConstMatrix\lInertiaMatrix^{-1}\lConstMatrix^\top \lambda = 0\\
&\Leftrightarrow \lConstMatrix\lInertiaMatrix^{-1} (\textrm{ad}^*_\xi(\lInertiaMatrix\xi) + \lInputMatrix u) = - \lConstMatrix\lInertiaMatrix^{-1}\lConstMatrix^\top \lambda\\
&\Leftrightarrow - (\lConstMatrix\lInertiaMatrix^{-1}\lConstMatrix^\top)^{-1}\lConstMatrix\lInertiaMatrix^{-1} (\textrm{ad}^*_\xi(\lInertiaMatrix\xi) + \lInputMatrix u) =  \lambda.
\end{align*}

\looseness-1 For most systems $\lInertiaMatrix$ is positive definite. Then,
if the $k$ nonholonomic constraints are linearly independent, $\lConstMatrix$ has full row rank and $(\lConstMatrix\lInertiaMatrix^{-1}\lConstMatrix^\top)$ admits an inverse, ensuring $\lambda$ exists.
Plugging the expression for $\lambda$ into the EPS equations gives
\begin{align*}
&\lInertiaMatrix \dot \xi = \textrm{ad}^*_\xi(\lInertiaMatrix\xi) + \lInputMatrix u + \lConstMatrix^\top \lambda\\
&= \textrm{ad}^*_\xi(\lInertiaMatrix\xi) + \lInputMatrix u - \lConstMatrix^\top (\lConstMatrix\lInertiaMatrix^{-1}\lConstMatrix^\top)^{-1}\lConstMatrix\lInertiaMatrix^{-1} (\textrm{ad}^*_\xi(\lInertiaMatrix\xi) + \lInputMatrix u)\\
& = (I- \lConstMatrix^\top (\lConstMatrix\lInertiaMatrix^{-1}\lConstMatrix^\top)^{-1}\lConstMatrix\lInertiaMatrix^{-1} )(\textrm{ad}^*_\xi(\lInertiaMatrix\xi) + \lInputMatrix u)\\
&\Leftrightarrow \dot \xi = \lInertiaMatrix^{-1}(I- \lConstMatrix^\top (\lConstMatrix\lInertiaMatrix^{-1}\lConstMatrix^\top)^{-1}\lConstMatrix\lInertiaMatrix^{-1} )(\textrm{ad}^*_\xi(\lInertiaMatrix\xi) + \lInputMatrix u)
\end{align*}
Defining $\mathcal P := \lInertiaMatrix^{-1}\lConstMatrix^\top(\lConstMatrix\lInertiaMatrix^{-1}\lConstMatrix^\top)^{-1}\lConstMatrix$, we have
\begin{align*}
\lInertiaMatrix^{-1}(I- \lConstMatrix^\top (\lConstMatrix\lInertiaMatrix^{-1}\lConstMatrix^\top)^{-1}\lConstMatrix\lInertiaMatrix^{-1} )= \lInertiaMatrix^{-1}-\mathcal P\lInertiaMatrix^{-1}.
\end{align*}
Let $\dot \xi_{free} := \lInertiaMatrix^{-1}(\textrm{ad}^*_\xi(\lInertiaMatrix\xi) + \lInputMatrix u)$ denote the twists resulting from the Euler-Poincar\'e equations of motion for the holonomic system with same inertia. We can finally write the twist update equation for the nonholonomic EPS system as
\begin{align}
\dot \xi &= (1-\mathcal P)\lInertiaMatrix^{-1}(\textrm{ad}^*_\xi(\lInertiaMatrix\xi) + \lInputMatrix u)=(I-\mathcal P)\dot \xi_{free}.
\end{align}
Thus, for systems with constant $\lInertiaMatrix$ and $\lConstMatrix$, we can pre-compute and store $(1-\mathcal P)$, potentially speeding up numerical integration.
For the SS unicycle dynamics, $\constMatrix$ is configuration-dependent and so a similar process does not hold.

\subsection{Numerical Validation: Converting State Space Dynamics to Lie group Form}\label{sec:App-numericalValidationDyn}

\begin{wrapfigure}[11]{r}{0.3\linewidth}
    \centering
    \vspace{-6mm}
    \includegraphics[width=\linewidth]{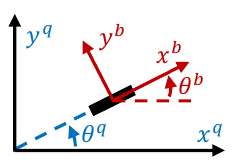}
    \vspace{-8mm}
    \caption{\textit{Generalized} $(\cdot)^q$ and \textit{body-fixed} $(\cdot)^b$ \textit{coordinates} of a second-order unicycle.}
    \label{fig:unicycle_coords}
\end{wrapfigure}
To validate the dynamics conversion described in Appendix\ref{sec:APPconvert_to_EPS}, we performed numerical experiments on a deterministic ($u_{noise}=0$) second-order unicycle system with no model mismatch, whose configuration can be seen in Figure \ref{fig:unicycle_coords}. The constraint matrices $(\constMatrix, \lConstMatrix)$ and force maps $(\inputMatrix,\lInputMatrix)$ shown above allow us to write the continuous-time equations of motion in SS and Lie group form.
Given $\tilde \lInertiaMatrix$, one can convert commanded body-frame wrenches $u_{cmd}$ to commanded accelerations, which we report instead for interpretability.
We propagate both representations using Forward Euler (FE), Symplectic Euler (SE), Heun, and Runge-Kutta 2nd order (RK2) integrators.
We further implemented a RK4 integrator for SS, and since RK4 is not trivially transferable to Euler-Poincar\'e-Suslov dynamics \citeapp{munthe1998runge}, we used a Commutator-Free Magnus fourth-order integrator (CF4) \citeapp{blanes2006fourth} for comparison.
Since $\lConstMatrix$ is constant, we also use the pre-computation procedure described in Appendix\ref{sec:APP-trickSpeedUpComp}. 
We simulated $625$ one-second-long trajectories per integrator, for both SS and Lie, spanning the grid defined by: $\dot x^b_0\in\texttt{lin}(0.1, 0.5, 5); \dot y^b_0=0;\dot \theta^b_0\in\texttt{lin}(0, 0.5, 5);\ddot x^b_0\in\texttt{lin}(0, 0.5, 5);\ddot \theta^b_0\in\texttt{lin}(0, 2, 5)$.
Testing only positively-valued angular rates and actions is sufficient due to the systems' inherent symmetry.
We compared both the SS and Lie integrated trajectories with a high-fidelity integration-free reference method \citeapp{missura2011efficient}. For the constant acceleration trajectories we tested, \citeapp{missura2011efficient} provides a closed-form solution to the second-order unicycle motion if $\ddot \theta_b=0$, and a Fresnal integral-based approximation otherwise.
We use as accuracy metric the RMSE of $e_1$, taken between the configurations of the reference $g_1$ and the numerically integrated SS/Lie dynamics $\tilde g_1$ at the end of the trajectory. 
In Table III, we show the average accuracy and computation time for $\dt = 0.1$ sec.
\begin{table}[h!]
\captionsetup{              
  skip   = 4pt,             
}
\centering
\scriptsize
\setlength{\tabcolsep}{3pt}
{\setstretch{0.8}\caption{Average Time per integration step (over 50k calls) and Accuracy (over $625$ grid-spanning trials) for $\dt=0.1\textrm{s}$}}
\vspace{-3mm}
\label{tab:app-integrator_times_means}
\begin{tabular}{l|c|c|c|c|c|c|c}
\toprule
\multicolumn{1}{c|}{\multirow{2}{*}{Performance Metric}} &
\multicolumn{1}{c|}{\multirow{2}{*}{Dynamics Representation Space}} & \multicolumn{6}{c}{Numerical Integrator} \\ 
& & FE & SE & Heun & RK2 & CF4 & RK4 \\
\midrule
\multirow{2}{*}{Runtime Per Step (ms) $\downarrow$} & State Space & 0.203 & 0.203 & 0.415 & 0.407 & --  & 0.847\\ 
& Lie group & \textbf{0.185} & \textbf{0.184} & \textbf{0.379} & \textbf{0.369} & \textbf{0.306} & -- \\
\midrule
\multirow{2}{*}{Accuracy (RMSE of $e_1 $) $\downarrow$} & State Space & 3.7e-2 & \textbf{3.4e-2} & 6.9e-4 & 8.7e-4 & -- & 4.0e-7 \\ 
& Lie group & \textbf{3.4e-2} & \textbf{3.4e-2} & \textbf{1.4e-4} & \textbf{1.4e-4} & \textbf{1.2e-8} & --\\
\bottomrule
\end{tabular}
\end{table}

\looseness-1 The computation time of each integrator was, as expected, closely proportional to the number of times $\approxDynamics$ is called. CF4 was 2.7 times faster than RK4 possibly due to only requiring a single call to $\approxDynamics$.
In order to estimate each integrator's order of accuracy, we repeated the experiment above for seven different log-spaced $\dt$ between $0.001$ and $0.1$ seconds.
We found the empirical orders of accuracy to align with theory for both the SS and Lie form --- FE and SE were first-order, Heun and RK2 second-order, RK4 and CF4 fourth-order. 
For each integrator and $\dt$ tested, the accuracy of the Lie group form dynamics was comparable to that of the SS dynamics. 
These results support the presented conversion from SS to Lie group dynamics.

\subsection{System Identification: Estimating mass and inertia for the MBot and JetBot}\label{sec:APP-systemIdentification}
\begin{wrapfigure}[13]{r}{0.4\linewidth}
    \centering
    \vspace{-3mm}
    \includegraphics[width=\linewidth]{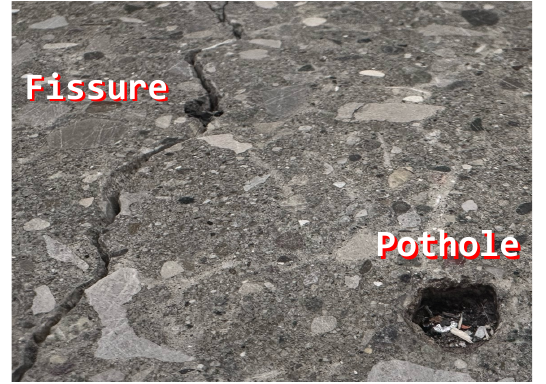}
    \caption{The lab floor is uneven, potentially introducing aleatoric uncertainty.}
    \label{fig:app-aleatoric-sources}
\end{wrapfigure}

\textit{Hardware}: The MBot's mass was estimated using a weight scale and its moment of inertia using the solid-disk's formula $\tilde I_z = \frac{1}{2}\tilde m \tilde r^2$, where $\tilde r$ is an estimated body radius. While crude, epistemic uncertainty arising from this system identification is also quantified by our proposed algorithm. Aleatoric uncertainty could have originated from network jitter, ground surface imperfections (Fig. \ref{fig:app-aleatoric-sources}), CoM variations (battery was unsecured), etc.

\textit{Simulation}: The JetBot's mass was estimated by applying constant body-frame forces/torques from rest, and then linearly fitting the observed linear/angular accelerations. This yielded $\tilde \lInertiaMatrix=\textrm{diag}(2.8, 2.8, 0.007)$.
Control inputs $u_{cmd}$ were converted to joint-efforts $\tau_{wheels}$ via the kinematic relation $\tau_{wheels} =J_{jetbot}u_{cmd}$, with $J_{jetbot}=\left[\begin{smallmatrix}
\tilde r/2 & -\tilde r/\tilde b \\ \tilde r / 2 & \tilde r/ \tilde b
\end{smallmatrix}\right]$. We used the wheel radius $\tilde r$ and the wheel separation distance $\tilde b$ provided by Isaac Sim's documentation. Epistemic uncertainty introduced by this transformation is accounted for by \textbf{CLAPS}. Aleatoric uncertainty was artificially introduced, as detailed in $\S$\ref{sec:sim_exp_only}.

\subsection{Sensitivity of Mesh Reconstruction to the Number of Sampled Boundary Points}\label{sec:APP-sensitivityNrParticles}

While \textbf{CLAPS} can be used for safe planning without reconstructing a C-Space mesh (see $\S$\ref{sec:imp_tricks}), we have assessed the computation speed vs. $\mathcal C^q$ reconstruction accuracy tradeoff for our Python-based implementation${}^\text{\ref{code_url}}$ of Alg. \ref{alg:fit-mesh}. Using the calibration and validation data from $\S$\ref{sec:real_experiment}, we evaluated \textbf{CLAPS} + Alg. \ref{alg:fit-mesh} when $5000$, $2000$, $1000$, and $500$ particles are sampled to represent the C-Space surface. For each case, we report the average runtime, C-Space Volume, and Marginal Coverage in Table IV.

\begin{table}[h]
\captionsetup{skip=4pt}
\centering
\setlength{\tabcolsep}{2pt}
{\setstretch{0.8}\caption{$\#$ Sampled $\partial \mathcal C^q$ Points' Impact on Computed Metrics (\textbf{CLAPS})}}
\vspace{-3mm}
\label{tab:app_pts_vs_speed}
\scriptsize
\begin{tabular}{c|c|c|c|c}
\toprule
\makecell{Number of $\partial \mathcal C^q$ \\ Points Sampled} & \makecell{5000 \\ (results in $\S$\ref{sec:real_experiment})} & 2000 & 1000 & 500 \\
\midrule
\makecell{CLAPS (Online Part) + \\ Alg. \ref{alg:fit-mesh} (s) $\downarrow$} 
& \ccell{green!15}0.37 $\pm$ 0.02 & \ccell{green!15}0.11 $\pm$ 0.01 & \ccell{green!15}0.06 $\pm$ 0.01 & \textbf{\ccell{green!15}0.04 $\pm$ 0.00} \\
\midrule
\makecell{C-Space Region Volume\\ ($m^2 \cdot rad$) $\downarrow$} 
& \ccell{green!15}0.00211 & \ccell{green!15}0.00210 & \ccell{green!15}0.00208 & \textbf{\ccell{green!15}0.00205} \\
\midrule
\makecell{Marginal Coverage \\(\%) $\uparrow$} 
& \ccell{green!15}90.41 & \ccell{green!15}90.36 & \ccell{green!15}90.31 & \ccell{green!15}90.22 \\
\bottomrule
\end{tabular}
\end{table}
\looseness-1 It appears that reducing the number of samples used to approximate the C-Space region's surface can provide an order-of-magnitude improvement to runtime, without significantly impacting the coverage or volume.
At $500$ sampled points, our online implementation runs at $25$ Hz, the same rate at which the MBot receives sensor measurements. Hence, the provided implementation appears adequate for deployment.
The small observed change in C-Space volume, and hence Marginal Coverage, will depend on the Mesh Reconstruction algorithm used, which is outside the scope of this letter.

\bibliographystyleapp{IEEEtran}
\bibliographyapp{references} 

\end{document}